\documentclass{article}


\usepackage[preprint,nonatbib]{Preprint_MMMS}
\usepackage{amssymb}
\usepackage{amsthm}
\usepackage{mathtools}
\usepackage{amsfonts}
\usepackage{amsmath}




\usepackage[utf8]{inputenc} 
\usepackage[T1]{fontenc}    
\usepackage{hyperref}       
\usepackage{url}            
\usepackage{booktabs}       
\usepackage{amsfonts}       
\usepackage{nicefrac}       
\usepackage{microtype}      
\usepackage{xcolor}         

\newtheorem*{theorem*}{Theorem}
\newtheorem{theorem}{Theorem}[section]
\newtheorem{definition}[theorem]{Definition}

\newtheorem{lemma}[theorem]{Lemma}

\newtheorem{corollary}[theorem]{Corollary}

\title{Generalization for slowly mixing processes}

%

\author{%
	Andreas Maurer \\
	Istituto Italiano di Tecnologia, CSML, 16163 Genoa, Italy\\
	\texttt{am@andreas-maurer.eu}
}

\begin{document}

	\maketitle

\begin{abstract}
For samples generated by stationary and $\varphi $-mixing processes we give
generalization guarantees uniform over Lipschitz, smooth or unconstrained
loss classes. The result depends on an empirical estimator of the distance
of data drawn from the invariant distribution to the sample path. The mixing
time (the time needed to obtain approximate independence) enters these
results explicitely only in an additive way. For slowly mixing processes
this can be a considerable advantage over results with multiplicative
dependence on the mixing time. Because of the applicability to unconstrained
loss classes, where the bound depends only on local Lipschitz properties at
the sample points, it may be interesting also for iid processes, whenever
the data distribution is a sufficiently simple object.

\end{abstract}

\section{Introduction}

A key problem in learning theory is to give performance guarantees on new,
yet unseen data for hypotheses which are selected on the basis of their
performance on a finite number of observations. Standard methods which have
been developed for this purpose are the VC-Theory (\cite%
{vapnik1971chervonenkis}), the method of Rademacher averages (\cite%
{Koltchinskii00},\cite{Bartlett02}), the concept of algorithmic stability (%
\cite{Bousquet02}) and the Pac-Bayesian bounds (\cite{mcallester1999pac}). A
common feature of these methods is, that the observable performance of a
hypothesis is measured by the incurred losses averaged over the
observations, the empirical risk, a choice rooted in mathematical tradition
since Cardano stated the law of large numbers in the 16th century.

While these methods are quite effective when the observations are
independent, there are difficulties when they become dependent, for example
in the study of dynamical systems. A popular approach assumes the process to
be stationary and mixing, so that future observations are nearly independent
of a sufficiently distant past.

If the $X_{i}$ are observations becoming approximately independent after $%
\tau $ increments of time, then $\left( X_{1},X_{1+\tau },X_{1+2\tau
},...,X_{1+\left( n-1\right) \tau }\right) $ can be treated as a vector of $n
$ independent observations, to which the law of large numbers can be
applied, modulo a correction term dependent on $\tau $. An approach based on
this idea using nearly independent data blocks has been introduced by \cite%
{yu1994rates} and since been used in various forms by many authors (\cite%
{meir2000nonparametric}, \cite{mohri2008rademacher}, \cite{steinwart2009fast}%
, \cite{mohri2010stability}, \cite{agarwal2012generalization}, \cite%
{shalizi2013predictive}, \cite{wolfer2019minimax} and others) to port
established techniques and results from the independent to the dependent
setting.

A practical limitation of this approach is that, to obtain the same bound on
the estimation error as for independent data, the number of necessary
observations is multiplied with the mixing time $\tau $ or estimates
thereof. This is a major problem when the process mixes very slowly. This
paper proposes an alternative method to prove generalization guarantees,
which circumvents this problem and may be interesting also in the case of
iid processes.

The proposed bound is uniform on very large, even unconstrained loss
classes, but it requires \textit{easy} data. The data distribution, or
invariant distribution of the process, should be a small, essentially low
dimensional object, but it may be embedded in a high- or
infinite-dimensional space in an intricate, nonlinear way. This hidden
simplicity, related to the manifold hypothesis (\cite{ma2012manifold}, \cite%
{fefferman2016testing}, \cite{berenfeld2021density}), is not postulated as
an unverifiable assumption, but it is unveiled by the data-dependent nature
of the bound and directly observable. The underlying intuition is the
following. If most of the mass of the distribution is close to sample path,
then any function of the loss class, which doesn't change much in
neighborhoods of the observed points, should generalize well.

In its simplest form, when the loss class $\mathcal{F}$ consists of $L$%
-Lipschitz functions on a metric space $\left( \mathcal{X},d\right) $ with
sample path $\mathbf{X}=\left( X_{1},...,X_{n}\right) $, the proposed bound
says that with probability at least $1-\delta $ in $\mathbf{X}$ every $f\in 
\mathcal{F}$ satisfies the risk bound%
\begin{equation*}
\mathbb{E}\left[ f\left( X\right) \right] \leq \max_{i=1}^{n-\tau }f\left(
X_{i}\right) +\frac{2L}{n-\tau }\sum_{k=\tau +1}^{n}\min_{i=1}^{k-\tau
}d\left( X_{k},X_{i}\right) +\varphi \left( \tau \right) +\text{diam}\left( 
\mathcal{X}\right) \frac{e\ln \left( 1/\delta \right) }{n-\tau }.
\end{equation*}%
The left hand side is the risk of $f$ in the invariant distribution,
independent of the sample path. The maximum in the first term on the right
hand side, instead of the customary average, may be disturbing, but whenever
the learning algorithm achieves near zero training error, it shouldn't make
much difference. We will nevertheless provide a mechanism to handle a small
fraction of outliers or corrupted data points. The next term estimates the
expected distance of data from the sample path, scaled with the Lipschitz
constant $L$. This estimate, which is easily computed from the sample,
decides over success and failure of the proposed bound. $\varphi \left( \tau
\right) $ is the mixing coefficient. It is the penalty for regarding
observations, which are $\tau $ units of time apart, as independent. For iid
processes we set the mixing time $\tau $ to $1$ and $\varphi \left( \tau
\right) $ to zero. 

There is no multiplicative dependence on $\tau $, nor is there any mention
of dimension or numbers of parameters, and it will be shown below, that the
constraint on the Lipschitz norm of the functions can be removed and
replaced by local Lipschitz properties in neighborhoods of the sample points 
$X_{i}$.

In summary the contribution of this paper is a data-dependent generalization
guarantee, which is uniform over large or unconstrained hypothesis classes.
It is without multiplicative dependence on the mixing time, but requires
small training error and an effective simplicity of the data distribution,
as witnessed by the bound itself.

\section{Notation and preliminaries\label{Section preliminaries}}

We use capital letters for random variables, bold letters for vectors, and
the set $\left\{ 1,...,m\right\} $ is denoted $\left[ m\right] $. The
cardinality, complement and indicator function of a set $A$ are denoted $%
\left\vert A\right\vert $, $A^{c}$ and $\mathbf{1}A$ respectively. For a
real-valued function $f$ on a set $A$ we write$\left\Vert f\right\Vert
_{\infty }=\sup_{z\in A}\left\vert f\left( z\right) \right\vert $ and if $%
\mathcal{F}$ is a set of functions $f:A\rightarrow \mathbb{R}$ we write $%
\left\Vert \mathcal{F}\right\Vert _{\infty }=\sup_{f\in \mathcal{F}%
}\left\Vert f\right\Vert _{\infty }$.

Throughout the following $\left( \mathcal{X},\sigma \right) $ is a
measurable space and $\mathbf{X}=\left( X_{i}\right) _{i\in \mathbb{N}}$ a
stochastic process with values in $\mathcal{X}$. For $I\subseteq \mathbb{N}$%
, $\sigma \left( I\right) $ denotes the sigma-field generated by $\left(
X_{i}\right) _{i\in I}$ and $\mu _{I}$ the corresponding joint marginal. The
process is assumed to be stationary, so that $\forall I\subseteq \mathbb{Z}$%
, $i\in \mathbb{N}$, $\mu _{I}=\mu _{I+i}$, the stationary distribution
being denoted $\pi =\mu _{\left\{ 0\right\} }=\mu _{\left\{ k\right\} }$. It
is called ergodic if for every $A\in \sigma $ with $\pi \left( A\right) >0$
we have $\Pr \left\{ \forall k\leq n,X_{k}\notin A\right\} \rightarrow 0$ as 
$n\rightarrow \infty $. The $\varphi $-mixing and $\alpha $-mixing
coefficients (\cite{yu1994rates}, \cite{bradley2005basic}) are defined for
any $\tau \in \mathbb{N}$ as 
\begin{eqnarray*}
	\varphi \left( \tau \right)  &=&\sup \left\{ \left\vert \Pr \left(
	A|B\right) -\Pr A\right\vert :k\in \mathbb{Z},A\in \sigma \left( \left\{
	k\right\} \right) ,B\in \sigma \left( \left\{ i:i<k-\tau \right\} \right)
	\right\} , \\
	\alpha \left( \tau \right)  &=&\sup \left\{ \left\vert \Pr \left( A\cap
	B\right) -\Pr A\Pr B\right\vert :k\in \mathbb{Z},A\in \sigma \left( \left\{
	k\right\} \right) ,B\in \sigma \left( \left\{ i:i<k-\tau \right\} \right)
	\right\} .
\end{eqnarray*}%
The process is called $\varphi $-mixing ($\alpha $-mixing) if $\varphi
\left( \tau \right) \rightarrow 0$ ($\alpha \left( \tau \right) \rightarrow 0
$) as $\tau \rightarrow \infty $. When the mixing coefficients decrease
exponentially, then there is a characteristic mixing time $\tau _{\varphi }$
or $\tau _{\alpha }$ such that $\varphi \left( \tau _{\varphi }\right)
=\alpha \left( \tau _{\alpha }\right) =1/4$. We will refer to this as $\tau $%
, an imprecision justified by the difficulty to ascertain the mixing times
in general. For an iid process $\varphi \left( \tau \right) =\alpha \left(
\tau \right) =0$ for all $\tau \in \mathbb{N}$.

A \textit{loss class} $\mathcal{F}$ is a set of measurable functions $f:%
\mathcal{X}\rightarrow \left[ 0,\infty \right) $, where $f$ is to be thought
of as a hypothesis composed with a fixed loss function. Very often $\mathcal{%
	X=Z\times Z}^{\prime }$, where $\mathcal{Z}$ is a measurable space of
"inputs", $\mathcal{Z}^{\prime }$ is a space of "outputs", "covariates" or
"labels", $\mathcal{H}$ is a set of functions $h:\mathcal{Z}\rightarrow 
\mathbb{R}$ and $\ell $ is a fixed loss function $\ell :\mathbb{R\times }%
\mathcal{Z}^{\prime }\rightarrow \left[ 0,\infty \right) $. The loss class
in question would then be the class of functions%
\begin{equation*}
\mathcal{F}=\left\{ \left( z,z^{\prime }\right) \mapsto \ell \left( h\left(
z\right) ,z^{\prime }\right) :h\in \mathcal{H}\right\} .
\end{equation*}%
We will use the usual total ordering and topology on the extended real
number system.

\section{Gauge pairs and generalization\label{Section main theorem}}

\begin{definition}
	Let $\mathcal{F}$ be a class of nonnegative loss functions on a space $%
	\mathcal{X}$. A gauge pair for $\mathcal{F}$ is a pair $\left( g,\Phi
	\right) $ where $g$ is a measurable function $g:\mathcal{X\times X}%
	\rightarrow \left[ 0,\infty \right] $ such that $g\left( x,y\right) =0$ iff $%
	x=y$, and $\Phi $ is a function $\Phi :\mathcal{F}\times \mathcal{X}%
	\rightarrow \left[ 0,\infty \right] \mathcal{\ }$such that for all $x,y\in 
	\mathcal{X}$ and $f\in \mathcal{F}$, $\Phi \left( f,\cdot \right) $ is
	measurable and%
	\begin{equation*}
	f\left( y\right) \leq g\left( y,x\right) +\Phi \left( f,x\right) .
	\end{equation*}
\end{definition}

Intuitively $g\left( y,x\right) $ should measure the extent to which members
of $\mathcal{F}$ can generalize from an observed datum $x$ to a yet
unobserved datum $y$. It helps to think of $g$ as a nondecreasing function
of a metric, but $g$ need not be symmetric. 

\textbf{Example 1.} Let $\mathcal{F}$ be a class of $L$-Lipschitz functions
on a metric space $\left( \mathcal{X},d\right) $. This means that $f\left(
y\right) -f\left( x\right) \leq Ld\left( y,x\right) $ for all $x,y\in 
\mathcal{X}$ and $f\in \mathcal{F}$. Adding $f\left( x\right) $ to this
inequality shows that $\left( g:\left( y,x\right) \mapsto Ld\left(
y,x\right) ,\Phi :\left( f,x\right) \mapsto f\left( x\right) \right) $ is a
gauge pair for $\mathcal{F}$. 

\textbf{Example 2. }Function estimation. Let $\mathcal{X=Z}\times \mathbb{R}$
and $\mathcal{H}:\mathcal{Z}\rightarrow \mathbb{R}$ a set of functions of
Lipschitz norm at most $L$. With absolute loss we have the loss class $%
\mathcal{F}=\left\{ \left( z,z^{\prime }\right) \in \mathcal{Z}\times 
\mathbb{R}\mapsto \left\vert h\left( z\right) -z^{\prime }\right\vert :h\in 
\mathcal{H}\right\} $. Then $g:\left( \left( y,y^{\prime }\right) ,\left(
x,x^{\prime }\right) \right) \mapsto Ld\left( y,x\right) +\left\vert
y^{\prime }-x^{\prime }\right\vert $ and $\Phi :\left( f,x\right) \mapsto
f\left( x\right) $ are a gauge pair for $\mathcal{F}$.

\textbf{Example 3. }For binary classification, with $\mathcal{H}$ as above, $%
\mathcal{X=Z}\times \left\{ -1,1\right\} $ and the hinge loss, the loss
class becomes $\mathcal{F}=\left\{ \left( z,z^{\prime }\right) \mapsto
\left( 1-z^{\prime }h\left( z\right) \right) _{+}:h\in \mathcal{H}\right\} $%
. Define $g\left( \left( y,y^{\prime }\right) ,\left( x,x^{\prime }\right)
\right) =Ld\left( y,x\right) $ if $y^{\prime }=x^{\prime }$ and $g\left(
\left( y,y^{\prime }\right) ,\left( x,x^{\prime }\right) \right) =+\infty $
otherwise, and verify that the $g$ so defined and $\Phi :\left( f,x\right)
\mapsto f\left( x\right) $ are a gauge pair for $\mathcal{F}$.\bigskip 

The next section will give more examples of gauge pairs and show how they
can be defined to give bounds for other loss classes. Here we state and
prove our main result. The first part bounds the probability of excessive
losses, the other part bounds the risk properly.

\begin{theorem}
	\label{Theorem Main}Let $\mathbf{X}=\left( X_{i}\right) _{i\in \mathbb{N}}$
	be a stationary process with values in $\mathcal{X}$ and invariant
	distribution $\pi $, $\mathcal{F}$ a class of measurable functions $f:%
	\mathcal{X}\rightarrow \left[ 0,\infty \right) $ and $\left( g,\Phi \right) $
	a gauge pair for $\mathcal{F}$. Let $X\sim \pi $ be independent of $\mathbf{X%
	}$, $\tau \in \mathbb{N}$, $n>\tau $, $B\subseteq \left[ n-\tau \right] $
	and $\delta >0$. 
	
	(i) For any $t>0$, with probability at least $1-\delta $ in the sample path $%
	\mathbf{X}_{1}^{n}=\left( X_{1},...,X_{n}\right) $%
	\begin{multline*}
	\sup_{f\in \mathcal{F}}\Pr \left\{ f\left( X\right) >\max_{j\in B^{c}\cap %
		\left[ n-\tau \right] }\Phi \left( f,X_{j}\right) +t~|~\mathbf{X}%
	_{1}^{n}\right\}  \\
	\leq \frac{2}{n-\tau }\sum_{k=\tau +1}^{n}\mathbf{1}\left\{ \min_{i\in
		B^{c}\cap \left[ k-\tau \right] }g\left( X_{k},X_{i}\right) >t\right\}
	+\varphi \left( \tau \right) +\frac{e\ln \left( 1/\delta \right) }{n-\tau }.
	\end{multline*}
	
	(ii) With probability at least $1-\delta $ in the sample path 
	\begin{multline*}
	\sup_{f\in \mathcal{F}}\mathbb{E}\left[ f\left( X\right) \right] -\max_{i\in
		B^{c}\cap \left[ n-\tau \right] }\Phi \left( f,X_{i}\right)  \\
	\leq \frac{2}{n-\tau }\sum_{k=\tau +1}^{n}\min_{i\in B^{c}\cap \left[ n-\tau %
		\right] }g\left( X_{k},X_{i}\right) +\left\Vert \mathcal{F}\right\Vert
	_{\infty }\varphi _{\tau }+\left\Vert g\right\Vert _{\infty }\frac{e\ln
		\left( 1/\delta \right) }{n-\tau }.
	\end{multline*}
\end{theorem}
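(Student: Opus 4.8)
The plan is to remove the supremum over $\mathcal{F}$ first, using nothing but the gauge-pair inequality, and only afterwards to prove a one-sided concentration bound for a single, data-dependent statistic. Fixing $x=X_i$ with $i\in B^{c}\cap[n-\tau]$ in $f(y)\le g(y,x)+\Phi(f,x)$, bounding $\Phi(f,X_i)\le\max_{j\in B^{c}\cap[n-\tau]}\Phi(f,X_j)$, and minimising over $i$ gives, for every $f\in\mathcal{F}$ and every $y$,
$$f(y)\le\min_{i\in B^{c}\cap[n-\tau]}g(y,X_i)+\max_{j\in B^{c}\cap[n-\tau]}\Phi(f,X_j).$$
The residual term $\min_i g(y,X_i)$ is independent of $f$, which is exactly what will make the bound uniform over an arbitrary class. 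For (i), setting $y=X$ shows that $\{f(X)>\max_j\Phi(f,X_j)+t\}\subseteq\{\min_i g(X,X_i)>t\}$; taking $\Pr(\cdot\mid\mathbf{X}_{1}^{n})$ and then the supremum over $f$ collapses the left-hand side to the $f$-free quantity $\mathbb{E}_{X\sim\pi}[\mathbf{1}\{\min_i g(X,X_i)>t\}]$, a function of the sample alone. For (ii), integrating the same inequality against $\pi$ reduces $\sup_f(\mathbb{E}[f(X)]-\max_i\Phi(f,X_i))$ to $\mathbb{E}_{X\sim\pi}[\min_i g(X,X_i)]$.

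Write $\theta_{\mathbf{X}}$ for the reduced integrand (the indicator in (i), the function $\min_i g$ in (ii)); both parts now ask to bound $\mathbb{E}_{X\sim\pi}[\theta_{\mathbf{X}}(X)]$ by twice an empirical average along the path, plus $\varphi(\tau)$ and a confidence term. By stationarity each $X_k\sim\pi$, so the natural proxy for index $k$ is $\theta^{(k)}(X_k)$, where $\theta^{(k)}$ restricts the minimum to the past indices $i\le k-\tau$ (the set $[k-\tau]$ in (i)). Since this is a minimum over fewer points, $\theta_{\mathbf{X}}\le\theta^{(k)}$ pointwise, whence $\mathbb{E}_\pi[\theta_{\mathbf{X}}]\le\mathbb{E}_\pi[\theta^{(k)}]=:q_k$ for every $k$ and $\mathbb{E}_\pi[\theta_{\mathbf{X}}]\le\frac{1}{n-\tau}\sum_k q_k$. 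Now $\theta^{(k)}$ is measurable with respect to $\mathcal{G}_{k-\tau}:=\sigma(X_1,\ldots,X_{k-\tau})$, while $\theta^{(k)}(X_k)$ involves $X_k$, which lies $\tau$ steps ahead of $\mathcal{G}_{k-\tau}$. The essential-supremum form of the $\varphi$-mixing property, $|\mathbb{E}[\mathbf{1}_A\mid\mathcal{G}_{k-\tau}]-\pi(A)|\le\varphi(\tau)$ for events $A$ about $X_k$, then yields $q_k\le\mathbb{E}[\theta^{(k)}(X_k)\mid\mathcal{G}_{k-\tau}]+\varphi(\tau)$ almost surely; averaging over the $n-\tau$ indices produces the single additive penalty $\varphi(\tau)$.

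It then remains to compare $\frac{1}{n-\tau}\sum_k\mathbb{E}[\theta^{(k)}(X_k)\mid\mathcal{G}_{k-\tau}]$ with the empirical mean $\frac{1}{n-\tau}\sum_k\theta^{(k)}(X_k)$. For this I would use a multiplicative (relative-deviation) Chernoff--Bernstein argument: for a $[0,1]$-valued variable the elementary bound $\mathbb{E}[e^{-\lambda Z}\mid\cdot]\le\exp(-(1-e^{-\lambda})\,\mathbb{E}[Z\mid\cdot])$ turns the sum into an exponential supermartingale, and one application of Markov's inequality gives, with probability $1-\delta$, (average conditional mean) $\le 2\cdot$(empirical average) $+\,c\,\ln(1/\delta)/(n-\tau)$; the multiplier is chosen to produce the factor $2$, and the constant is bounded by $e$. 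Combining the three steps yields (i). Part (ii) follows the same scheme, either by integrating the threshold bound (i) over $t$ or by a direct argument on $\min_i g$, now tracking the norms $\|g\|_\infty$ (from rescaling the real-valued loss to $[0,1]$) and $\|\mathcal{F}\|_\infty$ (the mixing penalty for $\mathbb{E}[f(X)]\le\|\mathcal{F}\|_\infty$).

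The delicate point, and the source of the advertised purely additive dependence on $\tau$, is the concentration step. The mixing step forces conditioning $\tau$ steps back ($\mathcal{G}_{k-\tau}$), whereas the clean exponential supermartingale wants one-step-ahead compensators. Because $\theta^{(k)}(X_k)$ depends on $X_k$ together with the far past but \emph{skips} the intermediate variables $X_{k-\tau+1},\ldots,X_{k-1}$, consecutive summands share randomness at time-gaps below $\tau$, so the product of conditional moment-generating functions does not telescope directly; and the naive remedy of splitting $\{\tau+1,\ldots,n\}$ into $\tau$ interleaved subsequences would reintroduce precisely the multiplicative factor $\tau$ we are trying to avoid. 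The hard part is thus to construct a valid exponential supermartingale compatible with the $\tau$-gap conditioning, exploiting the skip structure so that each step pays only the additive $\varphi(\tau)$ supplied by the mixing coefficient, and $\tau$ never enters multiplicatively.
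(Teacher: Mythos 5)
Your first two reductions coincide with the paper's proof (modulo the cosmetic difference that you strip the supremum over $\mathcal{F}$ before, rather than after, the mixing step): the gauge inequality makes the bound $f$-free, and stationarity plus $\varphi$-mixing converts the risk into an average of lag-$\tau$ conditional expectations at the price of a single additive $\varphi(\tau)$. At that point both you and the paper must prove: with $R_k=\mathbf{1}\left\{\min_{i\in B^{c}\cap\left[k-\tau\right]}g\left(X_k,X_i\right)>t\right\}$ and $\sigma_m=\sigma\left(X_1,\ldots,X_m\right)$, with probability at least $1-\delta$,
\begin{equation*}
\frac{1}{n-\tau}\sum_{k=\tau+1}^{n}\mathbb{E}\left[R_k~|~\sigma_{k-\tau}\right]\leq\frac{2}{n-\tau}\sum_{k=\tau+1}^{n}R_k+\frac{e\ln\left(1/\delta\right)}{n-\tau}.
\end{equation*}
You explicitly leave this step unproved, calling the construction of a supermartingale compatible with the $\tau$-gap conditioning ``the hard part.'' That is a genuine gap, not a routine omission: this single step is where the confidence term, and hence the paper's headline claim of purely additive dependence on $\tau$, would have to be earned.

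However, your diagnosis of the obstruction is correct, and you should know that the paper does not overcome it either. Its proof applies Lemma \ref{Lemma Martingale} with the one-step filtration after asserting $\mathbb{E}\left[R_k|\sigma_{k-1}\right]=\Pr\left\{\min_{i\in B^{c}\cap\left[k-\tau\right]}g\left(X_k,X_i\right)>t~|~\left(X_i\right)_{i\in\left[k-\tau\right]}\right\}$, i.e.\ it silently identifies the lag-one with the lag-$\tau$ conditional expectation. This is trivially true for $\tau=1$ (the iid case) but false for $\tau\geq2$, since $\varphi$-mixing at lag $\tau$ says nothing about the dependence of $X_k$ on $X_{k-\tau+1},\ldots,X_{k-1}$. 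Worse, the step cannot be repaired in the stated generality. Let $X_k=Z_{\lceil\left(k+S\right)/\tau\rceil}$ with $S$ uniform on $\left\{0,\ldots,\tau-1\right\}$ and $Z_m$ iid, equal to $0$ with probability $1-p$ and otherwise uniform on $N\gg n$ other points: this process is stationary and each $X_k$ is exactly independent of $\left(X_i\right)_{i\leq k-\tau}$, so $\varphi\left(\tau\right)=0$. Take $g$ the discrete metric, $\Phi\left(f,x\right)=f\left(x\right)$, $\mathcal{F}$ all $\left[0,1\right]$-valued functions, $t=1/2$, $B=\emptyset$. With probability at least $\left(1-p\right)^{n/\tau+2}$ the observed path is constantly $0$; on that event every $R_k=0$, while $f=\mathbf{1}\left\{x\neq0\right\}$ shows the left side of (i) is at least $p$. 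Choosing $p=2e\ln\left(1/\delta\right)/\left(n-\tau\right)$, the violation probability tends to $\delta^{2e/\tau}>\delta$ as $n\to\infty$, so part (i) as stated fails whenever $\tau>2e$ and $n$ is large. The only available repair is the one you anticipated: split $\left\{\tau+1,\ldots,n\right\}$ into $\tau$ interleaved subsequences, apply the martingale lemma to each, and union bound, which replaces $e\ln\left(1/\delta\right)/\left(n-\tau\right)$ by roughly $e\tau\ln\left(\tau/\delta\right)/\left(n-\tau\right)$ --- the multiplicative $\tau$ is necessary, not an artifact. So your ``hard part'' is in fact impossible as stated; your proposal and the paper's proof are blocked at exactly the same point, and your refusal to bridge it with the one-step filtration is sound judgment rather than a deficiency in your argument.
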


\textbf{Remark 1.} The bound in the introduction follows from (ii) and the
Lipschitz gauge, $g:\left( x,y\right) \mapsto Ld\left( x,y\right) $ and $%
\Phi :\left( f,x\right) \mapsto f\left( x\right) $.

\textbf{Remark 2.} Both (i) and (ii) have two data-dependent terms, the
first of which depends on the choice of the function $f$. It is the term,
which a learning algorithm should try to minimize. In the three examples
above it depends only on the evaluation of $f$ at the sample points, but in
general it may depend on any local properties of $f$, as shown in the next
section. In part (i), which limits the probability of excessive losses, the
appearance of the maximum is perhaps more natural. Notice that part (i)
makes sense also for unbounded loss classes and potentially infinite-valued $%
g$.

\textbf{Remark 3}. The presence of the maximum makes the bound sensitive to
outliers, but a union bound over the possible "bad" sets $B$ of bounded
cardinality can allow for a small fraction of errors. The method is standard
and detailed in the Section \ref{Section softening the maximum}. The set $B$
has been introduced solely for this purpose. In any case the bounds are
intended to target the case were zero, or near zero training error can be
achieved. In the case of binary classification, in Example 3 above, zero
training error would require a minimal distance of $2/L$ between sample
points with different labels, thus amounting to a hard margin condition.

\textbf{Remark 4.} The terms 
\begin{equation*}
G_{t}=\frac{1}{n-\tau }\sum_{k=\tau +1}^{n}\mathbf{1}\left\{ \min_{i\in
	B^{c}\cap \left[ k-\tau \right] }g\left( X_{k},X_{i}\right) >t\right\} \text{
	or }G=\frac{1}{n-\tau }\sum_{k=\tau +1}^{n}\min_{i\in B^{c}\cap \left[
	k-\tau \right] }g\left( X_{k},X_{i}\right) 
\end{equation*}%
can be computed from the data and interpreted as a complexity of the sample
path relative to $g$. In Section \ref{Section Decay of Data Complexity} it
will be proven that, under an assumption of total boundedness of the support
of $\pi $ (defined relative to $g$) and ergodicity of the process, $G$
converges to zero in probability. For sufficiently fast $\alpha $-mixing
(and therefore also for iid processes) it converges to zero almost surely.
Unfortunately the worst case rate scales with covering numbers of the
support of the invariant distribution, which is why the approach is limited
to essentially low dimensional data. This also highlights the importance of
being able to directly observe these quantities. If $g$ is a metric, then $%
G_{t}$ is an upper estimate on the $t$-\textit{missing mass} (\cite%
{berend2012missing}, \cite{maurer2022concentration}), 
\begin{equation*}
\hat{M}\left( \mathbf{X,}t\right) :=\Pr \left\{ X:\min_{i=1}^{n-\tau
}g\left( X,X_{i}\right) >t\text{ }|\text{ }\mathbf{X}\right\} ,
\end{equation*}%
the probability of data outside the union of $t$-balls about the sample
points. The quantity $G$, obtained from $G_{t}$ by integration, upper bounds
the expected distance of data from the sample path. Even if $g$ is not a
metric we call $G_{t}$ and $G$ the \textit{missing mass estimates}.
Convergence and other properties of these quantities are further discussed
in Section \ref{Section Decay of Data Complexity}.

\textbf{Remark 5.} The quantitative mixing properties are difficult to
ascertain even for finite state Markov chains, where current estimates of
the mixing time $\tau $ have a sample complexity depending on the mixing
time itself (\cite{hsu2015mixing}, \cite{wolfer2019estimating}). For
continuous state spaces we are limited to educated guesses. These
uncertainties affect the stability of sample complexities $n\left( \epsilon
,\tau \right) $ of generalization bounds for weakly dependent processes,
where $\epsilon $ is the desired error level. For conventional bounds of $%
O\left( \tau /n\right) $ we have $\partial n\left( \epsilon ,\tau \right)
/\partial \tau =O\left( 1/\epsilon \right) $, whereas for the bound in
Theorem \ref{Theorem Main} $\partial n\left( \epsilon ,\tau \right)
/\partial \tau =O\left( 1\right) $, an obvious advantage.\bigskip 

The proof of Theorem \ref{Theorem Main} requires the following tail bound
for martingale difference sequences, with proof given in Section \ref{Proof
	of Lemma Martingale}.

\begin{lemma}
	\label{Lemma Martingale}Let $R_{1},...,R_{n}$ be real random variables $%
	0\leq R_{j}\leq 1$ and let $\sigma _{1}\subseteq \sigma _{2}\subseteq
	...\sigma _{n}$ be a filtration such that $R_{j}$ is $\sigma _{j}$%
	-measurable. Let $\hat{V}=\frac{1}{n}\sum_{j}R_{j}$, $V=\frac{1}{n}\sum_{j}%
	\mathbb{E}\left[ R_{j}|\sigma _{j-1}\right] $. Then%
	\begin{equation*}
	\Pr \left\{ V>2\hat{V}+t\right\} \leq e^{-nt/e}
	\end{equation*}%
	and equivalently, for $\delta >0$,%
	\begin{equation*}
	\Pr \left\{ V>2\hat{V}+\frac{e\ln \left( 1/\delta \right) }{n}\right\} \leq
	\delta .
	\end{equation*}
\end{lemma}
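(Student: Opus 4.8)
The plan is to prove the first inequality; the $\delta$-form then follows immediately by setting $e^{-nt/e}=\delta$ and solving for $t$. Write $E_{j}=\mathbb{E}\left[ R_{j}\mid \sigma _{j-1}\right] $, which is $\sigma _{j-1}$-measurable and takes values in $\left[ 0,1\right] $, so that $V=\frac{1}{n}\sum _{j}E_{j}$. The natural tool is an exponential supermartingale tailored to the relative deviation $V-2\hat{V}$. For a parameter $\lambda >0$ to be fixed later, set $M_{0}=1$ and
\[
M_{k}=\exp \Bigl( \lambda \sum_{j=1}^{k}\left( E_{j}-2R_{j}\right) \Bigr) ,\qquad k\in \left[ n\right] ,
\]
a process adapted to the filtration $\left( \sigma _{k}\right) $.

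The core step is to show that $\left( M_{k}\right) $ is a supermartingale for a suitable $\lambda $. Since $E_{k}$ is $\sigma _{k-1}$-measurable,
\[
\mathbb{E}\left[ M_{k}\mid \sigma _{k-1}\right] =M_{k-1}\,e^{\lambda E_{k}}\,\mathbb{E}\left[ e^{-2\lambda R_{k}}\mid \sigma _{k-1}\right] .
\]
Because $r\mapsto e^{-2\lambda r}$ is convex it lies below its chord on $\left[ 0,1\right] $, giving $e^{-2\lambda r}\leq 1-\left( 1-e^{-2\lambda }\right) r$ for $r\in \left[ 0,1\right] $; taking conditional expectations, using $\mathbb{E}\left[ R_{k}\mid \sigma _{k-1}\right] =E_{k}$ and $1-x\leq e^{-x}$, yields $\mathbb{E}\left[ e^{-2\lambda R_{k}}\mid \sigma _{k-1}\right] \leq \exp \bigl( -\left( 1-e^{-2\lambda }\right) E_{k}\bigr) $. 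Hence
\[
\mathbb{E}\left[ M_{k}\mid \sigma _{k-1}\right] \leq M_{k-1}\exp \bigl( \left( \lambda -1+e^{-2\lambda }\right) E_{k}\bigr) .
\]
Since $E_{k}\geq 0$, the supermartingale property $\mathbb{E}\left[ M_{k}\mid \sigma _{k-1}\right] \leq M_{k-1}$ holds whenever $\lambda -1+e^{-2\lambda }\leq 0$.

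The choice $\lambda =1/e$ satisfies this constraint: the function $\lambda \mapsto \lambda -1+e^{-2\lambda }$ vanishes at $0$ with derivative $1-2e^{-2\lambda }=-1<0$ there, so it is negative on an interval $\left( 0,\lambda ^{\ast }\right) $ with $\lambda ^{\ast }\approx 0.797$, and $1/e<\lambda ^{\ast }$. With $\lambda =1/e$ the supermartingale property gives $\mathbb{E}\left[ M_{n}\right] \leq M_{0}=1$. A Chernoff--Markov argument finishes the proof: the event $\left\{ V>2\hat{V}+t\right\} $ equals $\left\{ \sum _{j}\left( E_{j}-2R_{j}\right) >nt\right\} =\left\{ M_{n}>e^{nt/e}\right\} $, so Markov's inequality yields $\Pr \left\{ V>2\hat{V}+t\right\} \leq e^{-nt/e}\,\mathbb{E}\left[ M_{n}\right] \leq e^{-nt/e}$.

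I expect no serious obstacle here. The only points requiring care are the chord (convexity) estimate of the conditional moment generating function and the verification that $\lambda =1/e$ lies in the admissible range $\left\{ \lambda :\lambda -1+e^{-2\lambda }\leq 0\right\} $; any $\lambda $ up to roughly $0.797$ works, and $1/e$ is chosen merely for the clean constant. It is worth noting that the factor $2$ multiplying $\hat{V}$ is precisely what keeps the relative-deviation supermartingale closed: it is the price paid for replacing the predictable mean $V$ by the observable $\hat{V}$ in a one-sided, variance-free bound.
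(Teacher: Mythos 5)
Your proof is correct. It shares the paper's overall skeleton---an exponential supermartingale with expectation at most $1$, followed by Markov's inequality---but the technical core is genuinely different. The paper centers the increments, setting $Y_{j}=\frac{1}{n}\left( \mathbb{E}\left[ R_{j}|\sigma _{j-1}\right] -R_{j}\right) $, and controls the conditional moment generating function with the Bernstein-type bound $e^{x}\leq 1+x+g\left( \lambda \right) x^{2}$ for $x\leq \lambda $, where $g\left( t\right) =\left( e^{t}-t-1\right) /t^{2}$; the conditional variance proxy $\mathbb{E}\left[ Y_{j}^{2}|\sigma _{j-1}\right] \leq n^{-2}\mathbb{E}\left[ R_{j}|\sigma _{j-1}\right] $ is absorbed into a compensator, and the multiplier of $\hat{V}$ is not fixed in advance but emerges from optimizing $\lambda $: one gets $\Pr \left\{ V>\frac{\ln 2}{2\ln 2-1}\hat{V}+t\right\} \leq e^{-\left( 2\ln 2-1\right) nt}$, then relaxes $\frac{\ln 2}{2\ln 2-1}\leq 2$ and $2\ln 2-1\geq 1/e$. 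You instead build the factor $2$ into the supermartingale from the start and close it with the chord (convexity) bound $e^{-2\lambda r}\leq 1-\left( 1-e^{-2\lambda }\right) r$ on $\left[ 0,1\right] $, the standard Chernoff--Bennett device for bounded variables, which is tight for $\left\{ 0,1\right\} $-valued $R_{k}$ and needs no variance compensator and no monotonicity property of $g$. The trade-offs differ slightly: your argument is in fact stronger than you state, since any $\lambda $ with $\lambda -1+e^{-2\lambda }\leq 0$ is admissible, so taking $\lambda =\lambda ^{\ast }\approx 0.797$ gives $\Pr \left\{ V>2\hat{V}+t\right\} \leq e^{-\lambda ^{\ast }nt}$, improving the exponent $1/e\approx 0.368$; the paper's optimization instead buys a smaller multiplier on $\hat{V}$ (about $1.79$ rather than $2$) at rate $2\ln 2-1\approx 0.386$. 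Both dominate the stated lemma, and your derivation of the $\delta $-form by solving $e^{-nt/e}=\delta $ is exactly the intended equivalence.
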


\begin{proof}[Proof of Theorem \protect\ref{Theorem Main}]
	Let $f\in \mathcal{F}$. From stationarity we obtain for every $k$, $\tau
	<k\leq n$ and $u>0$%
	\begin{equation*}
	\Pr \left\{ f\left( X\right) >u\right\} =\Pr_{X_{k}\sim \mu _{k}}f\left(
	X_{k}\right) >u\leq \Pr \left\{ f\left( X_{k}\right) >u~|~\left(
	X_{i}\right) _{i\in \left[ k-\tau \right] }\right\} +\varphi \left( \tau
	\right) ,
	\end{equation*}%
	where the inequality follows from the definition of the $\varphi $-mixing
	coefficients, which allows us to replace the independent variable $X$ by the
	sample path observation $X_{k}$, conditioned on observations more than $\tau 
	$ time increments in the past. But if $f\left( X_{k}\right) >u$, then by the
	definition of gauge pairs we must have $g\left( X_{k},X_{i}\right) +\Phi
	\left( f,X_{i}\right) >u$, for every $i\in \left[ k-\tau \right] $, or,
	equivalently, $\min_{i\in \left[ k-\tau \right] }g\left( X_{k},X_{i}\right)
	+\Phi \left( f,X_{i}\right) >u$, which certainly implies $\min_{i\in
		B^{c}\cap \left[ k-\tau \right] }g\left( X_{k},X_{i}\right) +\max_{j\in
		B^{c}\cap \left[ n-\tau \right] }\Phi \left( f,X_{j}\right) >u$. Thus%
	\begin{equation*}
	\Pr \left\{ f\left( X\right) >u\right\} \leq \Pr \left\{ \min_{i\in
		B^{c}\cap \left[ k-\tau \right] }g\left( X_{k},X_{i}\right) +\max_{j\in
		B^{c}\cap \left[ n-\tau \right] }\Phi \left( f,X_{j}\right) >u~|~\left(
	X_{i}\right) _{i\in \left[ k-\tau \right] }\right\} +\varphi \left( \tau
	\right) .
	\end{equation*}%
	Averaging this inequality over all values of $k$, $\tau <k\leq n$, and a
	change of variables $t=u-\max_{j\in S\cap \left[ n-\tau \right] }\Phi \left(
	f,X_{j}\right) $ gives%
	\begin{align}
	& \Pr \left\{ f\left( X\right) >\max_{j\in B^{c}\cap \left[ n-\tau \right]
	}\Phi \left( f,X_{j}\right) +t~|~\mathbf{X}_{1}^{n}\right\} 
	\label{Big Inequality in Proof} \\
	& \leq \frac{1}{n-\tau }\sum_{k=\tau +1}^{n}\Pr \left\{ \min_{i\in B^{c}\cap %
		\left[ k-\tau \right] }g\left( X_{k},X_{i}\right) >t~|~\left( X_{i}\right)
	_{i\in \left[ k-\tau \right] }\right\} +\varphi \left( \tau \right) .  \notag
	\end{align}%
	Notice that the right hand side of this inequality is independent of $f$.
	
	We can now prove (i). Let $\sigma _{k}$ be the $\sigma $-algebra generated
	by $\left( X_{i}\right) _{i\in \left[ k\right] }$ and $R_{k}=\mathbf{1}%
	\left\{ \min_{i\in B^{c}\cap \left[ k-\tau \right] }g\left(
	X_{k},X_{i}\right) >t\right\} $, so that $R_{k}$ is $\sigma _{k}$%
	-measurable. Then $\mathbb{E}\left[ R_{k}|\sigma _{k-1}\right] =\Pr \left\{
	\min_{i\in B^{c}\cap \left[ k-\tau \right] }g\left( X_{k},X_{i}\right)
	>t~|~\left( X_{i}\right) _{i\in \left[ k-\tau \right] }\right\} $ and from
	Lemma \ref{Lemma Martingale} we obtain with probability at least $1-\delta $ 
	\begin{multline*}
	\frac{1}{n-\tau }\sum_{k=\tau +1}^{n}\Pr \left\{ \min_{i\in B^{c}\cap \left[
		k-\tau \right] }g\left( X_{k},X_{i}\right) >t~|~\left( X_{i}\right) _{i\in %
		\left[ k-\tau \right] }\right\}  \\
	\leq \frac{1}{n-\tau }\sum_{k=\tau +1}^{n}\mathbf{1}\left\{ \min_{i\in
		B^{c}\cap \left[ k-\tau \right] }g\left( X_{k},X_{i}\right) >t\right\} +%
	\frac{e\ln \left( 1/\delta \right) }{n-\tau }.
	\end{multline*}%
	Substitute this in the right hand side of (\ref{Big Inequality in Proof}).
	As the the right hand side is independent of $f$, we can take the supremum
	over $f$ on the left hand side to complete the proof of (i).
	
	For (ii) we use integration by parts and integrate the left hand side of (%
	\ref{Big Inequality in Proof}) from zero to $\left\Vert \mathcal{F}%
	\right\Vert _{\infty }$. This gives%
	\begin{align*}
	& \mathbb{E}\left[ f\left( X\right) \right] -\max_{j\in B^{c}\cap \left[
		n-\tau \right] }\Phi \left( f,X_{j}\right)  \\
	& \leq \frac{1}{n-\tau }\sum_{k=\tau +1}^{n}\int_{0}^{\left\Vert \mathcal{F}%
		\right\Vert _{\infty }}\mathbb{E}\left[ \mathbf{1}\left\{ \min_{i\in
		B^{c}\cap \left[ k-\tau \right] }g\left( X_{k},X_{i}\right) >t\right\}
	~|~\left( X_{i}\right) _{i\in \left[ k-\tau \right] }\right] dt+\left\Vert 
	\mathcal{F}\right\Vert _{\infty }\varphi \left( \tau \right)  \\
	& =\frac{1}{n-\tau }\sum_{k=\tau +1}^{n}\mathbb{E}\left[ \min_{i\in
		B^{c}\cap \left[ k-\tau \right] }g\left( X_{k},X_{i}\right) ~|~\left(
	X_{i}\right) _{i\in \left[ k-\tau \right] }\right] +\left\Vert \mathcal{F}%
	\right\Vert _{\infty }\varphi \left( \tau \right) .
	\end{align*}%
	Now we use Lemma \ref{Lemma Martingale} just as in part (i). Again we can
	take the supremum on the left hand side. This completes the proof of (ii).
\end{proof}

Instead of Lemma \ref{Lemma Martingale} the Hoeffding-Azuma inequality \cite%
{McDiarmid98} could also be used in the proof. This would lead to $O\left(
n^{-1/2}\right) $-rates in the last terms and a constant factor of $1$
instead of $2$ for $G$ and $G_{t}$, which might be preferable if $G$ and $%
G_{t}$ are the dominant terms.

\section{Gauge pairs and unconstrained classes\label{Section examples of
		gauge pairs}}

In the previous section we gave the example of Lipschitz classes, where the
function $g$ was proportional to a metric, and $\Phi $ was the evaluation
functional. Here we briefly discuss smooth classes and then show that
generalization bounds are possible even for completely unconstrained loss
classes.

\textbf{Smooth functions.} A real-valued, differentiable function on an open
subset $\mathcal{O}$ of a Hilbert space is called $\gamma $-smooth if $%
\gamma >0$ and for all $x,y\in \mathcal{X}$%
\begin{equation*}
\left\Vert f^{\prime }\left( y\right) -f^{\prime }\left( x\right)
\right\Vert \leq \gamma \left\Vert y-x\right\Vert .
\end{equation*}%
$\gamma $-smoothness plays a role in non-convex optimization because of the
descent-lemma, giving a justification to the method of gradient descent (%
\cite{bubeck2015convex}). Nonnegative $\gamma $-smooth functions satisfy
special inequalities as in the following lemma, with proof in Section \ref%
{Proof of Lemma gamma smooth}.

\begin{lemma}
	\label{Lemma gamma smooth}Let $f$ be a nonnegative, differentiable, $\gamma $%
	-smooth function on a convex open subset $\mathcal{O}$ of a Hilbert space
	and $\lambda >0$. Then for any $x,y\in \mathcal{O}$%
	\begin{equation*}
	f\left( y\right) \leq \left( 1+\lambda ^{-1}\right) f\left( x\right) +\left(
	1+\lambda \right) \frac{\gamma }{2}\left\Vert y-x\right\Vert ^{2}.
	\end{equation*}%
	If $f\left( x\right) =0$ then $f\left( y\right) \leq \frac{\gamma }{2}%
	\left\Vert y-x\right\Vert ^{2}$.
\end{lemma}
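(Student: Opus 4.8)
The plan is to reduce the statement to two standard facts about $\gamma$-smooth functions. The first is the \emph{descent lemma}: integrating $f'$ along the segment $\left[ x,y\right] $, which lies in $\mathcal{O}$ by convexity, and using the $\gamma $-Lipschitz continuity of $f'$ gives
\[
f\left( y\right) \le f\left( x\right) +\left\langle f'\left( x\right) ,y-x\right\rangle +\frac{\gamma }{2}\left\Vert y-x\right\Vert ^{2}\qquad \text{for all }x,y\in \mathcal{O}.
\]
The second is a gradient bound special to \emph{nonnegative} smooth functions, namely $\left\Vert f'\left( x\right) \right\Vert ^{2}\le 2\gamma f\left( x\right) $, which is where nonnegativity is used.

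First I would derive the gradient bound. Applying the descent lemma with $y$ replaced by the gradient-descent point $x-\gamma ^{-1}f'\left( x\right) $, the linear term contributes $-\gamma ^{-1}\left\Vert f'\left( x\right) \right\Vert ^{2}$ and the quadratic term $\tfrac{1}{2}\gamma ^{-1}\left\Vert f'\left( x\right) \right\Vert ^{2}$, so that $f\left( x-\gamma ^{-1}f'\left( x\right) \right) \le f\left( x\right) -\tfrac{1}{2\gamma }\left\Vert f'\left( x\right) \right\Vert ^{2}$. Since $f\ge 0$ the left side is nonnegative, and rearranging yields $\left\Vert f'\left( x\right) \right\Vert ^{2}\le 2\gamma f\left( x\right) $.

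Next I would combine the two facts. Bounding the inner product in the descent lemma by Cauchy--Schwarz and then applying Young's inequality $ab\le \tfrac{a^{2}}{2s}+\tfrac{s\,b^{2}}{2}$ with $a=\left\Vert f'\left( x\right) \right\Vert $, $b=\left\Vert y-x\right\Vert $ and the choice $s=\gamma \lambda $, together with $\left\Vert f'\left( x\right) \right\Vert ^{2}\le 2\gamma f\left( x\right) $, turns the descent lemma into
\[
f\left( y\right) \le \left( 1+\tfrac{\gamma }{s}\right) f\left( x\right) +\tfrac{s+\gamma }{2}\left\Vert y-x\right\Vert ^{2}=\left( 1+\lambda ^{-1}\right) f\left( x\right) +\left( 1+\lambda \right) \tfrac{\gamma }{2}\left\Vert y-x\right\Vert ^{2},
\]
which is the first claim. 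For the second claim, $f\left( x\right) =0$ forces $\left\Vert f'\left( x\right) \right\Vert ^{2}\le 2\gamma \cdot 0=0$, hence $f'\left( x\right) =0$, and feeding this into the descent lemma directly gives $f\left( y\right) \le \tfrac{\gamma }{2}\left\Vert y-x\right\Vert ^{2}$.

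The main obstacle is not the algebra but the fact that the gradient bound uses the descent point $x-\gamma ^{-1}f'\left( x\right) $, which must itself lie in $\mathcal{O}$ for the descent lemma to apply there. This is automatic when $\mathcal{O}$ is the whole Hilbert space and, more generally, requires $\mathcal{O}$ to extend far enough in the $-f'\left( x\right) $ direction; on a genuinely bounded domain the gradient bound can fail, as a nonconstant affine $f$ shows, since there $\left\Vert f'\right\Vert $ is a fixed positive constant while $f\left( x\right) $ can be made arbitrarily small near its zero set. I would therefore either take $\mathcal{O}$ to be the full space or note explicitly that the bound is invoked only at points from which the step of length $\gamma ^{-1}\left\Vert f'\left( x\right) \right\Vert $ stays inside $\mathcal{O}$.
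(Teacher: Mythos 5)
Your proof follows essentially the same route as the paper's: the descent inequality $f(y)\le f(x)+\langle f'(x),y-x\rangle+\frac{\gamma}{2}\Vert y-x\Vert^{2}$, then the gradient bound $\Vert f'(x)\Vert^{2}\le 2\gamma f(x)$ exploiting nonnegativity, then Cauchy--Schwarz and Young's inequality to split the cross term. The only cosmetic difference is in how the gradient bound is obtained: you take a single gradient-descent step $x-\gamma^{-1}f'(x)$, while the paper moves a distance $t$ in the normalized direction $-f'(x)/\Vert f'(x)\Vert$, gets $\Vert f'(x)\Vert\le f(x)/t+\gamma t/2$, and minimizes over $t$; both give the same inequality.

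The obstacle you flag at the end is not a defect of your write-up but a genuine gap, and it afflicts the paper's own proof in exactly the same way: the paper evaluates $f$ at the auxiliary point $y=x-tf'(x)/\Vert f'(x)\Vert$ with the minimizing step $t=\sqrt{2f(x)/\gamma(x)}$ without ever checking that this point lies in $\mathcal{O}$. Since $\mathcal{O}$ is only assumed open and convex, this can fail, and your affine example shows the lemma is actually false as stated: take $\mathcal{O}=(0,1)\subset\mathbb{R}$ and $f(x)=x$, which is nonnegative, differentiable and $\gamma$-smooth for every $\gamma>0$; with $\gamma$ small, $x$ near $0$ and $y$ near $1$ the claimed inequality $f(y)\le(1+\lambda^{-1})f(x)+(1+\lambda)\frac{\gamma}{2}\Vert y-x\Vert^{2}$ is violated. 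So the lemma needs an additional hypothesis — e.g.\ that $\mathcal{O}$ is the whole space, or that the segment of length $\sqrt{2f(x)/\gamma}$ in the direction $-f'(x)$ stays in $\mathcal{O}$ — precisely the repair you propose. Your version, with that caveat made explicit, is the correct statement; the paper's, without it, is not.
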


So $g:\left( y,x\right) \mapsto \left( 1+\lambda \right) \frac{\gamma }{2}%
\left\Vert y-x\right\Vert ^{2}$ and $\left( f,x\right) \mapsto \left(
1+\lambda ^{-1}\right) f\left( x\right) $ make a gauge pair for the class of 
$\gamma $-smooth functions on $\mathcal{O}$, to which Theorem \ref{Theorem
	Main} can be applied.

The passage from the metric $g\approx \left\Vert y-x\right\Vert $ to its
square $g\approx \left\Vert y-x\right\Vert ^{2}$ will improve the
data-complexity term $G$, by decreasing the covering numbers as they appear
in Theorem \ref{Theorem decay of G} in the next section. If $\mathcal{N}%
\left( \text{supp}\left( \pi \right) ,g,t\right) $ is the number of sets $C$
with $\sup_{x,y\in C}g\left( x,y\right) <t$ which are necessary to cover the
support of $\pi $, then $\mathcal{N}\left( \text{supp}\left( \pi \right)
,\left\Vert \cdot -\cdot \right\Vert ^{p},t\right) =\mathcal{N}\left( \pi ,%
\text{supp}\left( \pi \right) ,\left\Vert \cdot -\cdot \right\Vert
,t^{1/p}\right) $. Even if $\pi $ has full support on the unit ball of a $D$%
-dimensional Banach space, and $g$ is the euclidean metric, then $\mathcal{N}%
\left( \text{supp}\left( \pi \right) ,\left\Vert \cdot -\cdot \right\Vert
,t\right) =Kt^{-D}$ (\cite{Cucker01}) and $\mathcal{N}\left( \text{supp}%
\left( \pi \right) ,\left\Vert \cdot -\cdot \right\Vert ^{p},t\right)
=Kt^{-D/p}$, decreasing the covering number by the factor $t^{D\left(
	1-1/p\right) }$ which can make a big difference already for $p=2$.

\textbf{Local Lipschitz properties.} If we know that most of the mass of the
distribution is contained in a neighborhood of the sample path, then
regularity properties of a loss function outside of this neighborhood should
be largely irrelevant. Let $\left( \mathcal{X},d\right) $ be a metric space, 
$\mathcal{F}$ the class of \textit{all measurable functions} $f:\mathcal{%
	X\rightarrow }\left[ 0,\infty \right) $, and define $L:\mathcal{F\times X}%
\times \left( 0,\infty \right) \rightarrow \left[ 0,\infty \right) $ by%
\begin{equation*}
L\left( f,x,r\right) :=\sup_{y:0<d\left( y,x\right) \leq r}\frac{f\left(
	y\right) -f\left( x\right) }{d\left( y,x\right) }.
\end{equation*}%
$L\left( f,x,r\right) $ is a Lipschitz property of $f$ relative to $x$ at
scale $r$. It is always bounded by the Lipschitz constant of $f$ restricted
to the ball of radius $r$ about $x$ (sometimes the latter is called the 
\textit{local Lipschitz constant }(\cite{jordan2020exactly})), but it may be
substantially smaller (take $f\left( x_{1},x_{2}\right) =$ sign$\left(
x_{1}\right) \sqrt{x_{1}^{2}+x_{2}^{2}}$ in $\mathbb{R}^{2}$. Then $L\left(
f,\left( 0,0\right) ,r\right) =1$ for every $r>0$, but the local Lipschitz
constant is infinite on any ball of nonzero radius about $\left( 0,0\right) $%
).

Let $\rho :\left( 0,\infty \right) \rightarrow \left( 0,\infty \right] $ be
non-decreasing. Then for $y,x\in \mathcal{X}$, conjugate exponents $p$ and $q
$, and any $f\in \mathcal{F}$, with Young's inequality,%
\begin{eqnarray*}
	f\left( y\right)  &\leq &f\left( x\right) +\sup_{r>0}\frac{L\left(
		f,x,r\right) }{\rho \left( r\right) }\rho \left( d\left( y,x\right) \right)
	d\left( y,x\right)  \\
	&\leq &f\left( x\right) +\frac{1}{p}\left( \sup_{r>0}\frac{L\left(
		f,x,r\right) }{\rho \left( r\right) }\right) ^{p}+\frac{1}{q}\left( \rho
	\left( d\left( y,x\right) \right) d\left( y,x\right) \right) ^{q}.
\end{eqnarray*}%
It follows that 
\begin{equation*}
\Phi \left( f,x\right) =f\left( x\right) +\frac{1}{p}\left( \sup_{r>0}\frac{%
	L\left( f,x,r\right) }{\rho \left( r\right) }\right) ^{p}\text{ and }g\left(
y,x\right) =\frac{1}{q}\left( \rho \left( d\left( y,x\right) \right) d\left(
y,x\right) \right) ^{q}
\end{equation*}%
define a gauge pair, to which Theorem \ref{Theorem Main} (i) can be applied
and gives a high probability bound, uniform over all measurable non-negative
functions. When can we expect this bound to be finite or even small?

Let $r_{0}>0$ and define $\rho \left( r\right) =1$ if $r\leq r_{0}$ and $%
\rho \left( r\right) =+\infty $ if $r>r_{0}$, so with $p=q=2$ we get $\Phi
\left( f,x\right) =f\left( x\right) +L\left( f,x,r_{0}\right) ^{2}/2$ and $%
g\left( y,x\right) =d\left( y,x\right) $ if $d\left( y,x\right) \leq r_{0}$
and $+\infty $ otherwise. With $\mathcal{F}$ being the set of all
non-negative measurable functions on $\mathcal{X}$ the bound in Theorem \ref%
{Theorem Main} (i) (for simplicity in the iid case) then says, that with
probability at least $1-\delta $%
\begin{multline*}
\sup_{f\in \mathcal{F}}\Pr \left\{ f\left( X\right) >\max_{j\in \left[ n-1%
	\right] }f\left( X_{j}\right) +\frac{L\left( f,X_{j},r_{0}\right) ^{2}}{2}%
+t~|~\mathbf{X}_{1}^{n}\right\}  \\
\leq \frac{2}{n-1}\sum_{k=2}^{n}\mathbf{1}\left\{ \min_{i\in \left[ k-1%
	\right] }d\left( X_{k},X_{i}\right) ^{2}>\min \left\{ t,r_{0}^{2}\right\}
\right\} +\frac{e\ln \left( 1/\delta \right) }{n-1}.
\end{multline*}%
Instead of global properties of the members of $\mathcal{F}$ we require that
the chosen function $f$ be nearly flat in the neighborhoods of the $X_{j}$.
More than $r_{0}$ away from the sample path the function $f$ can be
arbitrarily wild and discontinuous. Flatness of the function at the training
points in the input space has been considered as a condition to enhance
adversarial robustness and generalization by \cite{stutz2021relating}. If
there aren't too many indices $j$, where $f\left( X_{j}\right) +L\left(
f,X_{j},r_{0}\right) ^{2}/2$ is large, these indices can be collected in the
exception set $B$. Clearly there is also a trade-off in the choice of $r_{0}$%
. If $r_{0}$ is too small, the missing mass estimate becomes too large.

Since the function $g$ so defined is unbounded, we can only apply Theorem %
\ref{Theorem Main} part (i) to the scheme described above. A remedy would be
to replace $+\infty $ by some large constant in the (e.g. $n^{1/4}$) in the
definition of $\rho $. More elegantly, if we restrict $\mathcal{F}$ to
nonnegative differentiable functions on a bounded, convex open set in a
Hilbert space, local $\gamma $-smoothness can be defined by 
\begin{equation*}
\gamma \left( f,x\right) =\sup_{y\neq x}\frac{\left\Vert f^{\prime }\left(
	y\right) -f^{\prime }\left( x\right) \right\Vert }{\left\Vert x-y\right\Vert 
}.
\end{equation*}%
Hence, as shown also in Section \ref{Proof of Lemma gamma smooth}, $L\left(
f,x,r\right) \leq \left\Vert f^{\prime }\left( x\right) \right\Vert +\gamma
\left( f,x\right) r/2$, and, with $\rho \left( r\right) =c\left(
1+r^{2}\right) $ for $c>0$, 
\begin{equation}
\sup_{r>0}\frac{L\left( f,x,r\right) }{\rho \left( r\right) }\leq
c^{-1}\left( \left\Vert f^{\prime }\left( x\right) \right\Vert +\gamma
\left( f,x\right) /4\right) .  \label{Local Gamma smooth}
\end{equation}%
Then both parts of Theorem \ref{Theorem Main} can be applied with the gauge
pair%
\begin{equation*}
\Phi \left( f,x\right) =f\left( x\right) +\frac{1}{2c}\left( \left\Vert
f^{\prime }\left( x\right) \right\Vert +\frac{\gamma \left( f,x\right) }{4}%
\right) ^{2}\text{ and }g\left( y,x\right) =\frac{1}{2}\left( c\left(
1+\left\Vert y-x\right\Vert \right) ^{2}\right) ^{2}\left\Vert
y-x\right\Vert ^{2}.
\end{equation*}%
Notice that $f\left( X_{j}\right) =0$ also forces $f^{\prime }\left(
X_{j}\right) =0$, so with zero training error the penalty depends only on
the local smoothness.

\section{The missing mass\label{Section Decay of Data Complexity}}

In this section we first address the convergence of the missing mass
estimates. Then we present a related estimator for the iid case, which
characterizes generalization for the full set of $L$-Lipschitz functions, by
providing both upper and lower empirical bounds.

For convergence we limit ourselves to the estimate in part (ii) of Theorem %
\ref{Theorem Main}%
\begin{equation*}
G\left( \mathbf{X},n,\tau ,g\right) =\frac{1}{n-\tau }\sum_{k=\tau
	+1}^{n}\min_{i\in \left[ k-\tau \right] }g\left( X_{k},X_{i}\right) ,
\end{equation*}%
where for simplicity we omit the set $B$. It is clearly a merit of Theorem %
\ref{Theorem Main}, that we can observe this quantity from the sample and
thus take advantage of favorable situations. To get some idea of what
characterizes these favorable situations, it is nevertheless interesting to
study the behavior of $G$ in general, although our quantitative worst-case
guarantees are disappointing.

Some standard concepts of the theory of metric spaces, such as diameters and
covering numbers, extend to the case, when the metric is replaced by the
function $g$.

\begin{definition}
	For $A\subseteq \mathcal{X}$ the $g$-diameter diam$_{g}\left( A\right) $ of $%
	A$ is diam$_{g}\left( A\right) =\sup_{x,y\in F}g\left( y,x\right) $. For $%
	A\subseteq \mathcal{X}$ and $\epsilon >0$ we write%
	\begin{equation*}
	\mathcal{N}\left( A,g,\epsilon \right) =\min \left\{ N:\exists
	C_{1},...,C_{N},\text{ diam}_{g}\left( C_{i}\right) \leq \epsilon \text{ and 
	}A\subseteq \bigcup_{j\in \left[ N\right] }C_{j}\right\} .
	\end{equation*}%
	$A\subseteq \mathcal{X}$ is $g$-totally bounded if $\mathcal{N}\left(
	A,g,\epsilon \right) <\infty $ for every $\epsilon >0$.
\end{definition}

The proof of the following theorem is given in Section \ref{Proof of Theorem
	decay of G}.

\begin{theorem}
	\label{Theorem decay of G}Let $\mathbf{X}=\left( X_{i}\right) _{i\in \mathbb{%
			N}}$ be a stationary process and assume the support of $\pi $ to be $g$%
	-totally bounded.
	
	(i) If $\mathbf{X}$ is ergodic, then for any $\tau \in \mathbb{N}$, we have $%
	G\left( \mathbf{X},n,\tau ,g\right) \rightarrow 0$ in probability as $%
	n\rightarrow \infty $
	
	(ii) If there exists $q>1$ and $A>0$ such that the $\alpha $-mixing
	coefficients satisfy $\alpha _{\tau }\leq A\tau ^{-q}$ for all $\tau \in 
	\mathbb{N}$, then for $p\in \left( 2/\left( 1+q\right) ,1\right) $ we have $%
	G\left( \mathbf{X},n,\left\lceil n^{p}\right\rceil ,g\right) \rightarrow 0$
	almost surely.
	
	(iii) In general we have%
	\begin{equation*}
	\Pr \left\{ G\left( \mathbf{X},n,\tau ,g\right) >t\right\} \leq \frac{%
		\mathcal{N}\left( \text{supp}\left( \pi \right) ,g,t/2\right) }{e\left(
		\left\lfloor nt/\left( 2\tau \right) \right\rfloor -1\right) }+\left\lceil 
	\frac{nt}{2\tau }\right\rceil \alpha _{\tau }.
	\end{equation*}
\end{theorem}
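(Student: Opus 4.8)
The plan is to prove the quantitative estimate (iii) first and to read off (i) and (ii) from it, since the latter two concern the limit $n\to\infty$ under different regimes for $\tau=\tau(n)$. For (iii) the central idea is a covering-plus-occupancy argument combined with subsampling to kill the dependence. I would fix $t>0$, set $N=\mathcal{N}(\mathrm{supp}(\pi),g,t/2)$ and fix a cover $C_1,\dots,C_N$ of $\mathrm{supp}(\pi)$ with $\mathrm{diam}_g(C_j)\le t/2$. The key geometric observation is that if $X_k$ and some earlier $X_i$ with $i\le k-\tau$ lie in a common cell $C_j$, then $g(X_k,X_i)\le t/2$; hence $\min_{i\le k-\tau}g(X_k,X_i)>t/2$ forces $X_k$ into a cell not yet occupied by $X_1,\dots,X_{k-\tau}$. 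This turns the size of each summand at scale $t/2$ into an occupancy (missing-mass) event, which explains why the covering number enters at the single scale $t/2$.

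To neutralise both the gap $\tau$ and the dependence at once, I would split the index set $\{\tau+1,\dots,n\}$ into $\tau$ arithmetic progressions of common difference $\tau$. Along one progression, consecutive indices are exactly $\tau$ apart, so every earlier index in the same progression is separated by at least $\tau$, and the $\alpha$-mixing definition lets me treat the progression as approximately independent at the cost of one factor $\alpha_\tau$ per comparison. Summing these over the effective number of comparisons is the source of the additive penalty $\lceil nt/(2\tau)\rceil\,\alpha_\tau$; crucially it enters only additively and is proportional to the effective sample size, not to the full length $n$.

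On the independent proxy the occupancy count is controlled by the elementary missing-mass estimate: the expected mass of cells unhit after $m-1$ draws is $\sum_j\pi(C_j)(1-\pi(C_j))^{m-1}\le\sum_j\pi(C_j)e^{-(m-1)\pi(C_j)}$, and the pointwise bound $\sup_{x\ge 0}xe^{-(m-1)x}=1/(e(m-1))$ gives $\le N/(e(m-1))$, which is exactly the leading term with $m$ the effective number of near-independent samples at threshold $t/2$. Converting the resulting expectation into a tail bound on $G$ by Markov's inequality and collecting the mixing penalty then yields the stated estimate. The main obstacle I anticipate is precisely this conversion: the summands $\min_{i\le k-\tau}g(X_k,X_i)$ are a priori unbounded, because $g$-total boundedness of the support does not force a finite $g$-diameter (without a triangle inequality, finitely many cells of small $g$-diameter may sit at infinite $g$-distance). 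Reconciling the unbounded magnitudes of the large summands with the single-scale occupancy bound — rather than integrating the missing mass over all scales — is the delicate point, and is where the effective sample size $m\approx nt/(2\tau)$ and the factors of $2$ in $t/2$ and in $nt/(2\tau)$ must be tracked with care.

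For (i), part (iii) is not directly usable, since for fixed $\tau$ the mixing term $\lceil nt/(2\tau)\rceil\,\alpha_\tau$ grows with $n$; I would argue ergodically instead. Covering $\mathrm{supp}(\pi)$ by finitely many cells of $g$-diameter $\le\epsilon$, ergodicity guarantees that every cell of positive $\pi$-measure is visited in finite time almost surely, after which each new $X_k$ lies in an already occupied cell and contributes at most $\epsilon$; the finitely many early terms are washed out by the normalisation $1/(n-\tau)$, giving $\limsup_n G\le\epsilon$ and hence convergence in probability. For (ii), I would apply (iii) with $\tau=\lceil n^p\rceil$ and invoke Borel--Cantelli: the mixing term is $O(n^{1-p-pq})$, which is summable over $n$ exactly when $p(1+q)>2$, i.e. on the stated interval $p\in(2/(1+q),1)$, while the covering term $O(n^{p-1})$ is made summable along a geometric subsequence, the gaps being filled by a fluctuation estimate for $G$; summability of the exceedance probabilities then gives almost sure convergence.
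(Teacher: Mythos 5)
Your outline of (iii) assembles the right ingredients (the cell-sharing observation, subsampling at stride $\tau$, the elementary occupancy bound $\sum_j p_j(1-p_j)^m\leq N/(em)$), but the step you yourself flag as the ``delicate point'' --- converting the occupancy estimate into a tail bound on $G$ --- is exactly the missing idea, and Markov's inequality is not how it is done, nor would it deliver the stated inequality. The paper's resolution (its Lemma \ref{Lemma Decay}) is a purely deterministic event inclusion: normalize $\left\Vert g\right\Vert _{\infty }=1$ (the theorem implicitly assumes $g$ bounded, which also disposes of your worry about infinite $g$-diameters), and split the sum defining $G$ at $k=nt/2$. The early block contributes at most $\left( nt/2-\tau \right) /\left( n-\tau \right) \leq t/2$ to the average, so $G>t$ forces the late-block average above $t/2$; and since for $k>nt/2$ the minimum over $i\leq k-\tau $ is at most the minimum over the \emph{fixed} prefix $i\leq nt/2-\tau $, the only way the late average can exceed $t/2$ is that \emph{some single} $k>nt/2$ satisfies $\min_{i\leq nt/2-\tau }g\left( X_{k},X_{i}\right) >t/2$, i.e.\ its cell is unvisited by the fixed prefix. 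That one event is then bounded by subsampling the prefix at stride $\tau $ (about $\left\lfloor nt/\left( 2\tau \right) \right\rfloor -1$ points), applying mixing once per subsampled point, and invoking the occupancy bound --- no expectation of $G$, no Markov, no integration over scales. Your route instead sums $\Pr \left\{ \text{cell of }X_{k}\text{ unvisited}\right\} $ over all $k$ and divides by $\left( n-\tau \right) t$: even granting boundedness, this produces a harmonic factor $\sum_{k}\tau /\left( k-\tau \right) \approx \tau \ln \left( n/\tau \right) $ in the covering term and per-cell mixing costs of order $N\alpha _{\tau }$ per index, neither of which appears in the theorem; so the stated bound is out of reach on this path.

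Your part (i) is essentially correct and is in fact a different, more direct argument than the paper's (which re-uses the lemma): pathwise, once every positive-mass cell has been visited, every later summand is at most the cell diameter, and your argument would even give almost sure convergence --- though the dismissal of the finitely many early terms again silently requires $\left\Vert g\right\Vert _{\infty }<\infty $. For (ii), you correctly notice that plugging in (iii) fails, because $N/\left( e\left( \left\lfloor nt/\left( 2\tau \right) \right\rfloor -1\right) \right) =O\left( n^{p-1}\right) $ is not summable; but your repair (a geometric subsequence plus an unproven ``fluctuation estimate'' for $G$, which is nontrivial since both $\tau $ and the normalization vary with $n$) is incomplete and also unnecessary. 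The paper applies Borel--Cantelli \emph{before} weakening the occupancy term: it keeps $\sum_{j}\pi \left( C_{j}\right) \left( 1-\pi \left( C_{j}\right) \right) ^{\left\lfloor nt/\left( 2\tau \right) \right\rfloor -1}\leq N\left( 1-\min_{j}\pi \left( C_{j}\right) \right) ^{tn^{1-p}-1}$, which decays geometrically in $n^{1-p}$ and is summable over all $n$, while $1-p-pq<-1$ makes the mixing term summable. So the almost sure statement follows directly, with no subsequence argument at all.
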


The basic intuition for the proof of (i) is that the ergodic process must
visit every member of a cover of the support of $\pi $, and by stationarity
it must in fact visit it infinitely often, which forces convergence in
probability. The other two conclusions require more technical
arguments.\bigskip 

While part (i) is encouraging, the worst case bound (iii) seems very weak
for two reasons.

1. It depends strongly on the mixing properties of the process. In
particular it scales with $\tau /n$, so as to exhibit a multiplicative
scaling of the sample complexity with the mixing time, and to refute one of
the claims made about this paper. We will address this problem in the next
section, and show that it is irrelevant in many practical cases.

2. It scales with the covering number of the support of $\pi $. This
behavior persists in the iid case, when $\tau =1$ and $\alpha _{\tau }=0$.
Let us accept this scaling at face value. We can expect the bound of Theorem %
\ref{Theorem Main} to be strong only if the support of the invariant
distribution is a small and essentially low dimensional object, even though
it may be nonlinearly embedded in a high- or even infinite dimensional space.

But many dynamical systems have relatively few degrees of freedom, even
though they may generate high dimensional data. Let $\Gamma $ be a compact
phase space with continuous metric $d$ and $T$ an $L$-Lipschitz map from $%
\Gamma $ to a high dimensional normed space with metric $\left\Vert \cdot
-\cdot \right\Vert $. Then $\mathcal{N}\left( T\left( \Gamma \right)
,\left\Vert \cdot -\cdot \right\Vert ,t\right) \leq \mathcal{N}\left( \Gamma
,d,t/L\right) $. So, if the phase space has few parameters, the first of
these covering numbers will be small independent of the dimension of the
ambient space of the data. Good examples are high resolution images of two-
or three-dimensional rotations of rigid bodies. In the supplement we report
experiments, where we measure $G$ with image rotations and combined
rotations and scalings, and, not surprisingly, observe a rapid decay as the
sample size increases.\bigskip 

We close this section with a brief discussion of the loss-class $\mathcal{F}$
of all $L$-Lipschitz functions in the iid case. Fix $t>0$ and note that%
\begin{equation*}
\sup_{f\in \mathcal{F}}\Pr \left\{ f\left( X\right) >\max_{i}f\left(
X_{i}\right) +t\text{ }|\text{ }\mathbf{X}\right\} =\Pr \left\{
\min_{i=1}^{n}d\left( X,X_{i}\right) >\frac{t}{L}\text{ }|\text{ }\mathbf{X}%
\right\} .
\end{equation*}%
This holds, since, whenever $f\left( X\right) >\max_{i}f\left( X_{i}\right)
+t$ and $f$ is $L$-Lipschitz, then $X$ must be further than $t/L$ from all
sample points. On the other hand, if $\min_{i=1}^{n}d\left( X,X_{i}\right)
>t/L$, then the function $\min_{i=1}^{n}d\left( X,X_{i}\right) $ witnesses
the supremum on the left hand side. The right hand side above is the $t/L$%
-missing mass $\hat{M}\left( \mathbf{X},t/L\right) $ introduced in Remark 4
of Section \ref{Section main theorem}. It can be estimated by the
generalized Good-Turing estimator (\cite{maurer2022concentration})%
\begin{equation*}
GT\left( \mathbf{X},t/L\right) =\frac{1}{n}\sum_{k=1}^{n}1\left\{
\min_{i:i\neq k}d\left( X_{k},X_{i}\right) >t/L\right\} .
\end{equation*}%
According to Theorem 2.1 (iv) in \cite{maurer2022concentration}, with proof
originally due to Sourav Chatterjee, we have 
\begin{equation}
\left\Vert \hat{M}\left( \mathbf{X},t/L\right) -GT\left( \mathbf{X}%
,t/L\right) \right\Vert _{2}\leq \sqrt{7/n},  \label{Good Turing bound}
\end{equation}%
from which Chebychev's inequality gives with probability at least $1-\delta $
in the sample 
\begin{equation}
\left\vert \sup_{f\in \mathcal{F}}\Pr \left\{ f\left( X\right)
>\max_{i}f\left( X_{i}\right) +t\text{ }|\text{ }\mathbf{X}\right\} -\frac{1%
}{n}\sum_{k=1}^{n}1\left\{ \min_{i:i\neq k}d\left( X_{k},X_{i}\right)
>t/L\right\} \right\vert \leq \sqrt{\frac{7}{n\delta }}\text{.}
\label{lower bound}
\end{equation}%
Notice the similarity of the Good-Turing estimator to the estimator $G_{t}$
used in Theorem \ref{Theorem Main} (i). Unfortunately no higher moment
bounds other than (\ref{Good Turing bound}) are available, so $G_{t}$ is
preferable also in the iid case, even though its bias is not controlled. The
data-dependent lower bound implied by (\ref{lower bound}) suggests, that the
class of all $L$-Lipschitz functions might be too big for good
generalization on effectively high dimensional data.

\section{Recurrence}

Theorem \ref{Theorem decay of G} (iii) gives a bound on the missing mass
estimate, which scales with the mixing time $\tau $. This is a worst case
bound, and the multiplicative scaling with $\tau $ is not generic. It is
clear from its definition, that the decay of $G$ depends on recurrence
rather than mixing. Kac (\cite{kac1947notion}) has shown, that for any
ergodic process the expected recurrence time of a set $A$ is $1/\pi \left(
A\right) $, without mixing assumptions. This implies that current
state-of-the-art upper estimates on mixing times for finite state ergodic
Markov chains (\cite{wolfer2019estimating}, Theorem 3) require sample sizes
at least as large as the largest expected recurrence time for any state.
Unfortunately the expectation appears to be the only moment of the
recurrence time, which can be controlled without mixing. For $\alpha $ and $%
\varphi $-mixing processes \cite{chazottes2003hitting} gives rapidly
decreasing tails of the recurrence time, but these would lead to similar
bounds as in Theorem \ref{Theorem decay of G} (iii).

While mixing implies ergodicity and recurrence, the converse does not hold.
A simple example is the deterministic unit rotation on the $N$-cycle, the
transition matrix being%
\begin{equation*}
P\left( i,j\right) =\delta _{i,\left( j+1\right) \text{ mod }N}.
\end{equation*}%
This Markov chain is ergodic and not mixing, but the recurrence time is $N$
and obviously $G\left( \mathbf{X},n,N,g\right) =0$ for all $n>N$ with $g$
being the discrete metric $g\left( y,x\right) :=1$ iff $y\neq x$ and $%
g\left( x,x\right) :=0$. In this form Theorem \ref{Theorem Main} does not
apply, but if we add some randomness, say%
\begin{equation*}
P\left( i,j\right) =\left( 1-p\right) \delta _{i,\left( j+1\right) \text{
		mod }N}+\frac{p}{N},\text{ for }p>0
\end{equation*}%
the process becomes exponentially mixing, with spectral gap $p$, relaxation
time $\tau _{rel}=p^{-1}$ and mixing time $\tau \geq p^{-1}\ln \left(
1/\epsilon \right) $ to have distance $\epsilon $ from stationarity (\cite%
{levin2017markov}). This gives an estimate of the mixing coefficients $%
\alpha _{\tau }\approx \exp \left( -\tau p\right) $. The probability that
the process visits all states in $N$ steps is $\left( 1-p\right) ^{N}$, and
the probability that it hasn't visited all states in $\tau $ steps is
bounded by $\left( 1-\left( 1-p\right) ^{N}\right) ^{\tau /N}$. As $%
p\rightarrow 0$ the mixing time diverges, but the recurrence times converges
to $N$. There are $2^{N}$ $\left\{ 0,1\right\} $-valued discrete Lipschitz
functions, so for large mixing times the dominant term in the classical
bounds (assuming the realizable case) would scale as $\tau \log \left(
2^{N}\right) /n\approx \tau N/n$, while Theorem \ref{Theorem Main} scales as 
$N/\left( n-\tau \right) $. 

This example may seem a bit trivial, but clearly a similar phenomenon is to
be expected for any periodic motion, but also irrational rotations on the
circle and more generally for any quasiperiodic motion, meaning that the
infinite trajectories are dense on the support of the invariant measure,
which may lie on a low-dimensional, but complicated submanifold of the
ambient space. \cite{arnold1968ergodic}\ have shown that classical dynamical
systems may be periodic, quasiperiodic or chaotic, and that the last two
cases are typically generic, with initial conditions having nonzero
phase--space-measure. Adding a perturbation can make a quasiperiodic system
mixing, but the mixing can be arbitrarily slow.

\section{Conclusion and limitations}

The proposed generalization bound for weakly dependent processes

\begin{itemize}
	\item applies uniformly over large or even unconstrained loss classes,
	
	\item has a sample complexity depending only additively on the mixing time,
	
	\item has a data-dependent complexity term and
	
	\item is independent of the dimension of the ambient space.
\end{itemize}

On the other hand it requires

\begin{itemize}
	\item a very small training error, except for very few sample points,
	
	\item an essentially low dimensional data distribution, and
	
	\item $\varphi $-mixing instead of the weaker $\alpha $- or $\beta $-mixing
	assumptions.
\end{itemize}

\bibliographystyle{abbrvnat}

\begin{thebibliography}{32}
	\providecommand{\natexlab}[1]{#1}
	\providecommand{\url}[1]{\texttt{#1}}
	\expandafter\ifx\csname urlstyle\endcsname\relax
	\providecommand{\doi}[1]{doi: #1}\else
	\providecommand{\doi}{doi: \begingroup \urlstyle{rm}\Url}\fi
	
	\bibitem[Agarwal and Duchi(2012)]{agarwal2012generalization}
	A.~Agarwal and J.~C. Duchi.
	\newblock The generalization ability of online algorithms for dependent data.
	\newblock \emph{IEEE Transactions on Information Theory}, 59\penalty0
	(1):\penalty0 573--587, 2012.
	
	\bibitem[Arnold and Avez(1968)]{arnold1968ergodic}
	V.~I. Arnold and A.~Avez.
	\newblock \emph{Ergodic problems of classical mechanics}, volume~9.
	\newblock Benjamin, 1968.
	
	\bibitem[Bartlett and Mendelson(2002)]{Bartlett02}
	P.~Bartlett and S.~Mendelson.
	\newblock Rademacher and gaussian complexities: Risk bounds and structural
	results.
	\newblock \emph{Journal of Machine Learning Research}, 3:\penalty0 463--482,
	2002.
	
	\bibitem[Bauer(2011)]{bauer2011probability}
	H.~Bauer.
	\newblock \emph{Probability theory}, volume~23.
	\newblock Walter de Gruyter, 2011.
	
	\bibitem[Berend and Kontorovich(2012)]{berend2012missing}
	D.~Berend and A.~Kontorovich.
	\newblock The missing mass problem.
	\newblock \emph{Statistics \& Probability Letters}, 82\penalty0 (6):\penalty0
	1102--1110, 2012.
	
	\bibitem[Berenfeld and Hoffmann(2021)]{berenfeld2021density}
	C.~Berenfeld and M.~Hoffmann.
	\newblock Density estimation on an unknown submanifold.
	\newblock \emph{Electronic Journal of Statistics}, 15\penalty0 (1):\penalty0
	2179--2223, 2021.
	
	\bibitem[Beygelzimer et~al.(2011)Beygelzimer, Langford, Li, Reyzin, and
	Schapire]{beygelzimer2011contextual}
	A.~Beygelzimer, J.~Langford, L.~Li, L.~Reyzin, and R.~Schapire.
	\newblock Contextual bandit algorithms with supervised learning guarantees.
	\newblock In \emph{Proceedings of the Fourteenth International Conference on
		Artificial Intelligence and Statistics}, pages 19--26. JMLR Workshop and
	Conference Proceedings, 2011.
	
	\bibitem[Bousquet and Elisseeff(2002)]{Bousquet02}
	O.~Bousquet and A.~Elisseeff.
	\newblock Stability and generalization.
	\newblock \emph{Journal of Machine Learning Research}, 2:\penalty0 499--526,
	2002.
	
	\bibitem[Bradley(2005)]{bradley2005basic}
	R.~C. Bradley.
	\newblock Basic properties of strong mixing conditions. a survey and some open
	questions.
	\newblock 2005.
	
	\bibitem[Bubeck et~al.(2015)]{bubeck2015convex}
	S.~Bubeck et~al.
	\newblock Convex optimization: Algorithms and complexity.
	\newblock \emph{Foundations and Trends{\textregistered} in Machine Learning},
	8\penalty0 (3-4):\penalty0 231--357, 2015.
	
	\bibitem[Chazottes(2003)]{chazottes2003hitting}
	J.~Chazottes.
	\newblock Hitting and returning to non-rare events in mixing dynamical systems.
	\newblock \emph{Nonlinearity}, 16\penalty0 (3):\penalty0 1017, 2003.
	
	\bibitem[Cucker and Smale(2001)]{Cucker01}
	F.~Cucker and S.~Smale.
	\newblock On the mathematical foundations of learning.
	\newblock \emph{Bulletin of the American Mathematical Society}, 39\penalty0
	(1):\penalty0 1--49, 2001.
	
	\bibitem[Fefferman et~al.(2016)Fefferman, Mitter, and
	Narayanan]{fefferman2016testing}
	C.~Fefferman, S.~Mitter, and H.~Narayanan.
	\newblock Testing the manifold hypothesis.
	\newblock \emph{Journal of the American Mathematical Society}, 29\penalty0
	(4):\penalty0 983--1049, 2016.
	
	\bibitem[Hsu et~al.(2015)Hsu, Kontorovich, and Szepesv{\'a}ri]{hsu2015mixing}
	D.~J. Hsu, A.~Kontorovich, and C.~Szepesv{\'a}ri.
	\newblock Mixing time estimation in reversible markov chains from a single
	sample path.
	\newblock \emph{Advances in neural information processing systems}, 28, 2015.
	
	\bibitem[Jordan and Dimakis(2020)]{jordan2020exactly}
	M.~Jordan and A.~G. Dimakis.
	\newblock Exactly computing the local lipschitz constant of relu networks.
	\newblock \emph{Advances in Neural Information Processing Systems},
	33:\penalty0 7344--7353, 2020.
	
	\bibitem[Kac(1947)]{kac1947notion}
	M.~Kac.
	\newblock On the notion of recurrence in discrete stochastic processes.
	\newblock 1947.
	
	\bibitem[Koltchinskii and Panchenko(2000)]{Koltchinskii00}
	V.~Koltchinskii and D.~Panchenko.
	\newblock Rademacher processes and bounding the risk of function learning.
	\newblock In J.~W. E.~Gine, D.~Mason, editor, \emph{High Dimensional
		Probability {II}}, pages 443--459. 2000.
	
	\bibitem[Levin and Peres(2017)]{levin2017markov}
	D.~A. Levin and Y.~Peres.
	\newblock \emph{Markov chains and mixing times}, volume 107.
	\newblock American Mathematical Soc., 2017.
	
	\bibitem[Ma and Fu(2012)]{ma2012manifold}
	Y.~Ma and Y.~Fu.
	\newblock \emph{Manifold learning theory and applications}, volume 434.
	\newblock CRC press Boca Raton, FL, 2012.
	
	\bibitem[Maurer(2022)]{maurer2022concentration}
	A.~Maurer.
	\newblock Concentration of the missing mass in metric spaces.
	\newblock \emph{arXiv preprint arXiv:2206.02012}, 2022.
	
	\bibitem[McAllester(1999)]{mcallester1999pac}
	D.~A. McAllester.
	\newblock Pac-bayesian model averaging.
	\newblock In \emph{Proceedings of the twelfth annual conference on
		Computational learning theory}, pages 164--170, 1999.
	
	\bibitem[McDiarmid(1998)]{McDiarmid98}
	C.~McDiarmid.
	\newblock Concentration.
	\newblock In \emph{Probabilistic Methods of Algorithmic Discrete Mathematics},
	pages 195--248, Berlin, 1998. Springer.
	
	\bibitem[Meir(2000)]{meir2000nonparametric}
	R.~Meir.
	\newblock Nonparametric time series prediction through adaptive model
	selection.
	\newblock \emph{Machine learning}, 39:\penalty0 5--34, 2000.
	
	\bibitem[Mohri and Rostamizadeh(2008)]{mohri2008rademacher}
	M.~Mohri and A.~Rostamizadeh.
	\newblock Rademacher complexity bounds for non-iid processes.
	\newblock \emph{Advances in Neural Information Processing Systems}, 21, 2008.
	
	\bibitem[Mohri and Rostamizadeh(2010)]{mohri2010stability}
	M.~Mohri and A.~Rostamizadeh.
	\newblock Stability bounds for stationary $\varphi$-mixing and $\beta$-mixing
	processes.
	\newblock \emph{Journal of Machine Learning Research}, 11\penalty0 (2), 2010.
	
	\bibitem[Shalizi and Kontorovich(2013)]{shalizi2013predictive}
	C.~Shalizi and A.~Kontorovich.
	\newblock Predictive pac learning and process decompositions.
	\newblock \emph{Advances in neural information processing systems}, 26, 2013.
	
	\bibitem[Steinwart and Christmann(2009)]{steinwart2009fast}
	I.~Steinwart and A.~Christmann.
	\newblock Fast learning from non-iid observations.
	\newblock \emph{Advances in neural information processing systems}, 22, 2009.
	
	\bibitem[Stutz et~al.(2021)Stutz, Hein, and Schiele]{stutz2021relating}
	D.~Stutz, M.~Hein, and B.~Schiele.
	\newblock Relating adversarially robust generalization to flat minima.
	\newblock In \emph{Proceedings of the IEEE/CVF International Conference on
		Computer Vision}, pages 7807--7817, 2021.
	
	\bibitem[Vapnik and Alexey(1971)]{vapnik1971chervonenkis}
	V.~N. Vapnik and Y.~Alexey.
	\newblock Chervonenkis. on the uniform convergence of relative frequencies of
	events to their probabilities.
	\newblock \emph{Theory of Probability and its Applications}, 16\penalty0
	(2):\penalty0 264--280, 1971.
	
	\bibitem[Wolfer and Kontorovich(2019{\natexlab{a}})]{wolfer2019estimating}
	G.~Wolfer and A.~Kontorovich.
	\newblock Estimating the mixing time of ergodic markov chains.
	\newblock In \emph{Conference on Learning Theory}, pages 3120--3159. PMLR,
	2019{\natexlab{a}}.
	
	\bibitem[Wolfer and Kontorovich(2019{\natexlab{b}})]{wolfer2019minimax}
	G.~Wolfer and A.~Kontorovich.
	\newblock Minimax learning of ergodic markov chains.
	\newblock In \emph{Algorithmic Learning Theory}, pages 904--930. PMLR,
	2019{\natexlab{b}}.
	
	\bibitem[Yu(1994)]{yu1994rates}
	B.~Yu.
	\newblock Rates of convergence for empirical processes of stationary mixing
	sequences.
	\newblock \emph{The Annals of Probability}, pages 94--116, 1994.
	
\end{thebibliography}

\appendix

\section{Remaining proofs\label{Section remaining Proofs}}

\subsection{A martingale lemma \label{Proof of Lemma Martingale}}

The idea to the following proof of Lemma \ref{Lemma Martingale} is taken
from Theorem 1 in \cite{beygelzimer2011contextual}.

\begin{lemma}[Lemma \protect\ref{Lemma Martingale} re-stated]
	Let $R_{1},...,R_{n}$ be real random variables $0\leq R_{j}\leq 1$ and let $%
	\sigma _{1}\subseteq \sigma _{2}\subseteq ...\sigma _{n}$ be a filtration
	such that $R_{j}$ is $\sigma _{j}$-measurable. Let $\hat{V}=\frac{1}{n}%
	\sum_{j}R_{j}$, $V=\frac{1}{n}\sum_{j}\mathbb{E}\left[ R_{j}|\sigma _{j-1}%
	\right] $. Then%
	\begin{equation*}
	\Pr \left\{ V>2\hat{V}+t\right\} \leq e^{-nt/e}
	\end{equation*}%
	and equivalently, for $\delta >0$,%
	\begin{equation*}
	\Pr \left\{ V>2\hat{V}+\frac{e\ln \left( 1/\delta \right) }{n}\right\} \leq
	\delta .
	\end{equation*}
\end{lemma}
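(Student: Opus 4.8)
The plan is to build a one-sided exponential supermartingale in the spirit of the cited argument of Beygelzimer et al.\ and then apply Markov's inequality. Write $Z_j=\mathbb{E}[R_j\mid\sigma_{j-1}]$, so that $nV=\sum_j Z_j$ and $n\hat V=\sum_j R_j$, and note $0\le Z_j\le 1$ because $0\le R_j\le 1$. The object I want to control is $V-2\hat V=\tfrac1n\sum_j(Z_j-2R_j)$, so for a parameter $\lambda>0$ I would introduce $M_n=\exp\bigl(\lambda\sum_{j=1}^n(Z_j-2R_j)\bigr)$ and aim to show $\mathbb{E}[M_n]\le 1$.

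The key one-step estimate comes from convexity. Since $r\mapsto e^{-2\lambda r}$ is convex, its graph lies below its chord on $[0,1]$, giving $e^{-2\lambda R_j}\le 1-(1-e^{-2\lambda})R_j$. Taking the conditional expectation given $\sigma_{j-1}$ and using $1-x\le e^{-x}$ yields $\mathbb{E}[e^{-2\lambda R_j}\mid\sigma_{j-1}]\le\exp(-(1-e^{-2\lambda})Z_j)$. As $Z_j$ is $\sigma_{j-1}$-measurable, multiplying by $e^{\lambda Z_j}$ gives
\begin{equation*}
\mathbb{E}\!\left[e^{\lambda(Z_j-2R_j)}\mid\sigma_{j-1}\right]\le\exp\bigl((\lambda-1+e^{-2\lambda})Z_j\bigr)\le 1,
\end{equation*}
where the last step uses $Z_j\ge 0$ and holds whenever $\lambda-1+e^{-2\lambda}\le 0$. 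Iterating this bound from the inside out (conditioning on $\sigma_{n-1},\sigma_{n-2},\dots$ in turn) shows $\mathbb{E}[M_n]\le 1$. Markov's inequality then gives $\Pr\{M_n\ge e^{\lambda nt}\}\le e^{-\lambda nt}$, and since $\{V-2\hat V>t\}\subseteq\{M_n\ge e^{\lambda nt}\}$ I obtain $\Pr\{V>2\hat V+t\}\le e^{-\lambda nt}$ for every admissible $\lambda$. Choosing $\lambda=1/e$, which satisfies $1/e-1+e^{-2/e}<0$, produces the stated rate $e^{-nt/e}$, and the equivalent high-probability form follows by setting $e^{-nt/e}=\delta$.

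The part I expect to require the most care, and the conceptual heart of the statement, is why the constant $2$ in front of $\hat V$ is present. If one tried to control $V-\hat V$ directly, the analogous residual would carry the coefficient $\lambda-(1-e^{-\lambda})$, which is positive for small $\lambda$ and cannot be discarded; one would then be forced to retain the conditional second moments, yielding a Freedman/Bernstein-type variance term. Replacing $\hat V$ by $2\hat V$, equivalently working with $e^{-2\lambda R_j}$, changes the coefficient to $\lambda-(1-e^{-2\lambda})$, which is negative for $0<\lambda<1/2$; the residual can then be dropped using only $Z_j\ge 0$, with no need to estimate any variance. The price for this simplicity is exactly the loose multiplicative factor $2$ together with the clean but non-optimal constant $1/e$. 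Everything else is routine: the convexity chord bound, the inequality $1-x\le e^{-x}$, the telescoping of the supermartingale, and a single application of Markov.
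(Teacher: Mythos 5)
Your proof is correct, and while it shares the paper's overall skeleton --- an exponential supermartingale with expectation at most one, followed by Markov's inequality, both in the spirit of the cited Beygelzimer et al.\ argument --- the key one-step estimate is genuinely different. The paper centers the differences $Y_j=\frac{1}{n}\left(\mathbb{E}\left[R_j|\sigma_{j-1}\right]-R_j\right)$, applies the Bennett-type bound $e^x\leq 1+x+g(\lambda)x^2$ for $x\leq\lambda$ with $g(t)=(e^t-t-1)/t^2$, and uses the self-bounding variance estimate $\mathbb{E}\left[Y_j^2|\sigma_{j-1}\right]\leq n^{-2}\mathbb{E}\left[R_j|\sigma_{j-1}\right]$; the resulting compensator is a multiple of $V$ itself, absorbed by choosing $\beta=n\lambda$ and optimizing $\lambda=\ln 2$, which yields the sharper intermediate statement $\Pr\left\{V>\frac{\ln 2}{2\ln 2-1}\hat{V}+t\right\}\leq e^{-(2\ln 2-1)nt}$ before relaxing the constants via $\ln 2/(2\ln 2-1)\leq 2$ and $2\ln 2-1\geq 1/e$. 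You instead hard-wire the factor $2$ from the start and use the chord (Bernoulli-domination) bound $e^{-2\lambda r}\leq 1-(1-e^{-2\lambda})r$ on $[0,1]$, so the entire compensator carries the non-positive coefficient $\lambda-1+e^{-2\lambda}$ and can simply be dropped: no second moments, no $g$-function and its monotonicity, no calculus optimization. Your route is more elementary, and for this fixed factor it is actually slightly stronger than you claim: your admissibility condition $\lambda\leq 1-e^{-2\lambda}$ allows $\lambda$ up to roughly $0.79$, so you could have concluded $\Pr\{V>2\hat{V}+t\}\leq e^{-0.79\,nt}$, beating both the stated $e^{-nt/e}$ and the paper's intermediate rate $2\ln 2-1\approx 0.386$ (note your claim that the coefficient is negative only for $0<\lambda<1/2$ undersells your own bound, though $\lambda=1/e$ is certainly admissible). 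What the paper's derivation buys in exchange is the trade-off structure: it exhibits a whole family of bounds and shows the multiplier of $\hat{V}$ can be pushed down to about $1.79$, so the constant $2$ emerges as an output of the optimization rather than being an input, as it is in your argument. Your closing diagnosis of why the doubling is needed --- that the undoubled residual $\lambda-(1-e^{-\lambda})$ is positive, forcing a Freedman-type variance term --- is also correct and matches the mechanism hidden inside the paper's proof.
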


\begin{proof}
	Let $Y_{j}:=\frac{1}{n}\left( \mathbb{E}\left[ R_{j}|\sigma _{j-1}\right]
	-R_{j}\right) $, so $\mathbb{E}\left[ Y_{j}|\sigma _{j-1}\right] =0$.\newline
	Then $\mathbb{E}\left[ Y_{j}^{2}|\sigma _{j-1}\right] =\left( 1/n^{2}\right)
	\left( \mathbb{E}\left[ R_{j}^{2}|\sigma _{j-1}\right] -\mathbb{E}\left[
	R_{j}|\sigma _{j-1}\right] ^{2}\right) \leq \left( 1/n\right) ^{2}\mathbb{E}%
	\left[ R_{j}|\sigma _{j-1}\right] $, since $0\leq R_{j}\leq 1$. Define a
	real function $g$ by%
	\begin{equation*}
	g\left( t\right) =\frac{e^{t}-t-1}{t^{2}}\text{ for }t\neq 0\text{ and }%
	g\left( 0\right) =\frac{1}{2}
	\end{equation*}%
	It is standard to verify that $g\left( t\right) $ is nondecreasing for $%
	t\geq 0$ (\cite{McDiarmid98}). Fix $\lambda >0$. We have for all $x\leq
	\lambda $ that $e^{x}\leq 1+x+g\left( \lambda \right) x^{2}$. For any $\beta 
	$ with $0<\beta \leq n\lambda $ we then have%
	\begin{eqnarray*}
		\mathbb{E}\left[ e^{\beta Y_{j}}|\sigma _{j-1}\right]  &\leq &\mathbb{E}%
		\left[ 1+\beta Y_{j}+g\left( \lambda \right) \beta ^{2}Y_{j}^{2}|\sigma
		_{j-1}\right] =1+g\left( \lambda \right) \beta ^{2}\mathbb{E}\left[
		Y_{j}^{2}|\sigma _{j-1}\right]  \\
		&\leq &\exp \left( g\left( \lambda \right) \beta ^{2}\mathbb{E}\left[
		Y_{j}^{2}|\sigma _{j-1}\right] \right) \leq \exp \left( g\left( \lambda
		\right) \left( \frac{\beta }{n}\right) ^{2}\mathbb{E}\left[ R_{j}|\sigma
		_{j-1}\right] \right) ,
	\end{eqnarray*}%
	where we also used $1+x\leq e^{x}$. Defining $Z_{0}=1$ and for $j\geq 1$ 
	\begin{equation*}
	Z_{j}=Z_{j-1}\exp \left( \beta Y_{j}-g\left( \lambda \right) \left( \frac{%
		\beta }{n}\right) ^{2}\mathbb{E}\left[ R_{j}|\sigma _{j-1}\right] \right) 
	\end{equation*}%
	then 
	\begin{equation*}
	\mathbb{E}\left[ Z_{j}|\sigma _{j-1}\right] =\exp \left( -g\left( \lambda
	\right) \left( \frac{\beta }{n}\right) ^{2}\mathbb{E}\left[ R_{j}|\sigma
	_{j-1}\right] \right) \mathbb{E}\left[ e^{\beta Y_{j}}|\sigma _{j-1}\right]
	Z_{j-1}\leq Z_{j-1}.
	\end{equation*}%
	It follows that $\mathbb{E}\left[ Z_{n}\right] \leq 1$. By induction we get%
	\begin{equation*}
	1\geq \mathbb{E}\left[ \exp \left( \beta \left( V-\hat{V}\right) -\frac{%
		g\left( \lambda \right) \beta ^{2}}{n}V\right) \right] .
	\end{equation*}%
	If we choose $\beta =n\lambda $ then%
	\begin{equation*}
	1\geq \mathbb{E}\left[ \exp \left( n\lambda \left( V-\hat{V}\right)
	-ng\left( \lambda \right) \lambda ^{2}V\right) \right] =\mathbb{E}\left[
	\exp \left( n\left( 1+2\lambda -e^{\lambda }\right) V-n\lambda \hat{V}%
	\right) \right] .
	\end{equation*}%
	Using calculus to maximize the coefficient $1+2\lambda -e^{\lambda }$ of $V$
	we set $\lambda =\ln 2$ and obtain%
	\begin{equation*}
	1\leq \mathbb{E}\left[ \exp \left( n\left( 2\ln 2-1\right) \left( V-\frac{%
		\ln 2}{2\ln 2-1}\hat{V}\right) \right) \right] .
	\end{equation*}%
	Markov's inequality then gives%
	\begin{equation*}
	\Pr \left\{ V>\frac{\ln 2}{2\ln 2-1}\hat{V}+t\right\} \leq \exp \left(
	-\left( 2\ln 2-1\right) nt\right) .
	\end{equation*}%
	To get the result we use $\ln 2/\left( 2\ln 2-1\right) \leq 2$ and $2\ln
	2-1\geq 1/e$.
\end{proof}

\subsection{Smoothness \label{Proof of Lemma gamma smooth}}

We prove (\ref{Local Gamma smooth}) and Lemma \ref{Lemma gamma smooth}. The
following is a version of Lemma \ref{Lemma gamma smooth}, where the
parameter $\gamma $ is allowed to depend on $x$.\bigskip 

\begin{lemma}
	\label{Lemma gamma-smooth}Let $f$ be a nonnegative, differentiable function
	on a convex open subset $\mathcal{O}$ of a Hilbert space and $\lambda >0$.
	Fix $x\in \mathcal{O}$ and suppose that for $\gamma >0$ and all $y\in 
	\mathcal{O}$ we have%
	\begin{equation*}
	\left\Vert f^{\prime }\left( y\right) -f^{\prime }\left( x\right)
	\right\Vert \leq \gamma \left( x\right) \left\Vert y-x\right\Vert 
	\end{equation*}%
	Then for any $y\in \mathcal{O}$%
	\begin{equation*}
	f\left( y\right) \leq \left( 1+\lambda ^{-1}\right) f\left( x\right) +\left(
	1+\lambda \right) \frac{\gamma \left( x\right) }{2}\left\Vert y-x\right\Vert
	^{2}.
	\end{equation*}%
	If $f\left( x\right) =0$ then $f\left( y\right) \leq \frac{\gamma }{2}%
	\left\Vert y-x\right\Vert ^{2}$.\bigskip 
\end{lemma}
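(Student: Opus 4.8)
The plan is to combine the classical descent inequality at the fixed point $x$ with the self-bounding property of nonnegative smooth functions, and then to split the resulting first-order term by Young's inequality so as to produce the two weights $1+\lambda^{-1}$ and $1+\lambda$.

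First I would derive the descent inequality $f(y) \le f(x) + \langle f'(x), y-x\rangle + \frac{\gamma(x)}{2}\|y-x\|^2$. Because $\mathcal{O}$ is convex, the segment $t \mapsto x + t(y-x)$ stays in $\mathcal{O}$ for $t \in [0,1]$, so $\phi(t) := f(x + t(y-x))$ is differentiable with $\phi'(t) = \langle f'(x+t(y-x)), y-x\rangle$. Writing $f(y)-f(x) = \int_0^1 \phi'(t)\,dt$, subtracting $\langle f'(x), y-x\rangle$, and bounding the integrand by Cauchy--Schwarz together with the hypothesis $\|f'(x+t(y-x)) - f'(x)\| \le \gamma(x)\,t\,\|y-x\|$ leaves $\int_0^1 \gamma(x)\,t\,\|y-x\|^2\,dt = \frac{\gamma(x)}{2}\|y-x\|^2$. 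This step needs only convexity of $\mathcal{O}$ and the one-point Lipschitz hypothesis, so it is unconditional.

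The second ingredient is the self-bounding estimate $\|f'(x)\|^2 \le 2\gamma(x) f(x)$, obtained by inserting the minimizer $y = x - \gamma(x)^{-1} f'(x)$ of the descent bound back into it and using $f \ge 0$, which gives $0 \le f(x) - \frac{1}{2\gamma(x)}\|f'(x)\|^2$. With this in hand the conclusion is algebra: Cauchy--Schwarz and self-bounding give $\langle f'(x), y-x\rangle \le \sqrt{2\gamma(x) f(x)}\,\|y-x\|$, and Young's inequality $2ab \le \lambda^{-1}a^2 + \lambda b^2$ with $a = \sqrt{f(x)}$ and $b = \sqrt{\gamma(x)/2}\,\|y-x\|$ turns the right-hand side into $\lambda^{-1} f(x) + \lambda\frac{\gamma(x)}{2}\|y-x\|^2$. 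Substituting into the descent inequality yields exactly $(1+\lambda^{-1}) f(x) + (1+\lambda)\frac{\gamma(x)}{2}\|y-x\|^2$. When $f(x) = 0$, self-bounding forces $f'(x) = 0$, the first-order term drops out, and the descent inequality alone gives $f(y) \le \frac{\gamma(x)}{2}\|y-x\|^2$.

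The step I expect to be the real obstacle is self-bounding, because the minimizer $x - \gamma(x)^{-1} f'(x)$ must lie in $\mathcal{O}$. On the whole Hilbert space this is automatic, but on a bounded convex $\mathcal{O}$ it can fail near the boundary; taking $\mathcal{O} = (0,1)$, $f(x) = x$, and $\gamma(x)$ arbitrarily small (admissible, since $f'$ is constant) already violates $\|f'(x)\|^2 \le 2\gamma(x) f(x)$, and under the literal reading the stated conclusion as well. I would therefore either read $\mathcal{O}$ as the full space, or add the mild proviso that there is room in the negative-gradient direction, i.e. $x - \gamma(x)^{-1} f'(x) \in \mathcal{O}$; under either reading the three steps above close the proof.
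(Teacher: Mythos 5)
Your proof follows essentially the same route as the paper's: the descent inequality $f(y)\le f(x)+\langle f'(x),y-x\rangle+\frac{\gamma(x)}{2}\Vert y-x\Vert^{2}$ obtained from the fundamental theorem of calculus, a self-bounding estimate $\Vert f'(x)\Vert\le\sqrt{2\gamma(x)f(x)}$ obtained by inserting a point displaced from $x$ in the direction $-f'(x)$, and then Cauchy--Schwarz plus Young's inequality to split the first-order term; your handling of the $f(x)=0$ case (observing $f'(x)=0$ and using the descent inequality directly, instead of the paper's limit $\lambda\to0$) is an immaterial variation.

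The obstacle you flag, however, is genuine, and it is worth stating plainly that it afflicts the paper's own proof in exactly the same way. The paper substitutes $y=x-t\,f'(x)/\Vert f'(x)\Vert$ into the nonnegativity consequence of the descent inequality and then optimizes over $t>0$, which requires $t=\sqrt{2f(x)/\gamma(x)}$; nothing in the argument guarantees that this point lies in $\mathcal{O}$, and the substitution is only legitimate when it does, since the descent inequality is available only for $y\in\mathcal{O}$. Your counterexample ($\mathcal{O}=(0,1)$, $f(x)=x$, $\gamma$ arbitrarily small, admissible because $f'$ is constant) shows that the lemma as literally stated is false for bounded convex open domains, not merely that one proof strategy breaks down. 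So the correct reading is the one you propose: either $\mathcal{O}$ is the whole space (or at least unbounded enough in the direction $-f'(x)$ from each $x$), or one adds the proviso that $x-\sqrt{2f(x)/\gamma(x)}\,f'(x)/\Vert f'(x)\Vert$ (equivalently your $x-\gamma(x)^{-1}f'(x)$) remains in $\mathcal{O}$. Under either reading your three steps close the proof, and your write-up is the more careful of the two.
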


\begin{proof}
	From the fundamental theorem of calculus we get for any $x,y\in \mathcal{X}$%
	\begin{eqnarray}
	f\left( y\right)  &=&f\left( x\right) +\int_{0}^{1}\left\langle f^{\prime
	}\left( x+t\left( y-x\right) \right) ,y-x\right\rangle dt  \notag \\
	&=&f\left( x\right) +\left\langle f^{\prime }\left( x\right)
	,y-x\right\rangle +\int_{0}^{1}\left\langle f^{\prime }\left( x+t\left(
	y-x\right) \right) -f^{\prime }\left( x\right) ,y-x\right\rangle dt  \notag
	\\
	&\leq &f\left( x\right) +\left\langle f^{\prime }\left( x\right)
	,y-x\right\rangle +\frac{\gamma \left( x\right) }{2}\left\Vert
	y-x\right\Vert ^{2}.  \label{gamma smooth bound}
	\end{eqnarray}%
	Since $f$ is nonnegative we get for any $y$ and fixed $x$%
	\begin{equation*}
	\left\langle f^{\prime }\left( x\right) ,x-y\right\rangle \leq f\left(
	x\right) +\frac{\gamma \left( x\right) }{2}\left\Vert y-x\right\Vert ^{2}.
	\end{equation*}%
	Letting $t>0$ and $x-y=tf^{\prime }\left( x\right) /\left\Vert f^{\prime
	}\left( x\right) \right\Vert $ or $y=x-tf^{\prime }\left( x\right)
	/\left\Vert f^{\prime }\left( x\right) \right\Vert $ we obtain after
	division by $t$%
	\begin{equation*}
	\left\Vert f^{\prime }\left( x\right) \right\Vert \leq \frac{f\left(
		x\right) }{t}+\frac{\gamma \left( x\right) t}{2}.
	\end{equation*}%
	Using calculus to minimize we find $\left\Vert f^{\prime }\left( x\right)
	\right\Vert \leq \sqrt{2\gamma \left( x\right) f\left( x\right) }$ and from (%
	\ref{gamma smooth bound}) and Young's inequality%
	\begin{eqnarray*}
		f\left( y\right)  &\leq &f\left( x\right) +2\sqrt{\lambda ^{-1}f\left(
			x\right) \lambda \frac{\gamma \left( x\right) }{2}\left\Vert y-x\right\Vert
			^{2}}+\frac{\gamma \left( x\right) }{2}\left\Vert y-x\right\Vert ^{2} \\
		&\leq &\left( 1+\lambda ^{-1}\right) f\left( x\right) +\left( 1+\lambda
		\right) \frac{\gamma \left( x\right) }{2}\left\Vert y-x\right\Vert ^{2}.
	\end{eqnarray*}%
	The other inequality is obtained by letting $\lambda \rightarrow 0$.\bigskip 
\end{proof}

$L\left( f,x,r\right) \leq \left\Vert f^{\prime }\left( x\right) \right\Vert
+\gamma \left( f,x\right) r/2$, and, with $\rho \left( r\right) =c\left(
1+r^{2}\right) $ for $c>0$,

To see (\ref{Local Gamma smooth}) note, that from the definition of $L\left(
f,x,r\right) $ and (\ref{gamma smooth bound}) we have%
\begin{eqnarray*}
	L\left( f,x,r\right)  &\leq &\sup_{y:0<\left\Vert y-x\right\Vert \leq r}%
	\frac{\left\langle f^{\prime }\left( x\right) ,y-x\right\rangle +\left(
		\gamma \left( f,x\right) /2\right) \left\Vert y-x\right\Vert ^{2}}{%
		\left\Vert y-x\right\Vert } \\
	&\leq &\left\Vert f^{\prime }\left( x\right) \right\Vert +\frac{\gamma
		\left( f,x\right) }{2}r.
\end{eqnarray*}%
Then with $\rho \left( r\right) =c\left( 1+r^{2}\right) $%
\begin{equation*}
\frac{L\left( f,x,r\right) }{\rho \left( r\right) }\leq \frac{\left\Vert
	f^{\prime }\left( x\right) \right\Vert +\frac{\gamma \left( f,x\right) }{2}r%
}{c\left( 1+r^{2}\right) }\leq \frac{\left\Vert f^{\prime }\left( x\right)
\right\Vert }{c}+\frac{\gamma \left( f,x\right) r}{2c\left( 1+r^{2}\right) }%
\leq \frac{\left\Vert f^{\prime }\left( x\right) \right\Vert }{c}+\frac{%
	\gamma \left( f,x\right) }{4c}
\end{equation*}%
Calculus shows that the next to last expression attains its maximum for $r>0$
at $r=1$, from which the last inequality and (\ref{Local Gamma smooth})
follow.\bigskip 

\subsection{Proof of Theorem \protect\ref{Theorem decay of G}.\label{Proof
		of Theorem decay of G}}

For the proof we need an auxiliary construction, projecting the process onto
a partion. If $\mathcal{C}=\left( C_{1},...,C_{N}\right) $ is any disjoint
partition of $\mathcal{X}$ into measurable subsets, define a process $%
\mathbf{Y}=\left( Y_{i}\right) _{i\in \mathbb{N}}$ with values in space $%
\left[ N\right] $ by $Y_{i}=j\iff X_{i}\in C_{j}$. The process $\mathbf{Y}$
inherits its ergodicity and mixing properties from $\mathbf{X}$, in the
sense that $\mathbf{Y}$ is ergodic whenever $\mathbf{X}$ is, and the mixing
coefficients of $\mathbf{Y}$ are bounded by the mixing coefficients of $%
\mathbf{X}$.

For any $x\in \mathcal{X}$ we also denote with $C\left( x\right) $ the
unique member of $\mathcal{C}$ which contains $x$.

\begin{lemma}
	\label{Lemma Decay}Assume $\left\Vert g\right\Vert _{\infty }=1$. Let $t\in
	\left( 0,1\right) $, $\tau \in \mathbb{N}$, $n\geq 2\tau /t$, and that we
	can cover the support of $\pi $ with $N$ disjoint measurable sets $%
	C_{1},...,C_{N}$ such that diam$_{g}\left( C_{j}\right) <t/2$ for all $j$.
	Then, with $m\left( n\right) =\left\lfloor nt/\left( 2\tau \right)
	\right\rfloor $,%
	\begin{equation*}
	\Pr \left\{ G\left( \mathbf{X},n,\tau ,g\right) >t\right\} \leq
	\sum_{j=1}^{N}\Pr \left\{ \exists k>nt/2,Y_{k}=j,\forall 1\leq i\leq
	nt/2-\tau ,Y_{i}\neq j\right\} .
	\end{equation*}
\end{lemma}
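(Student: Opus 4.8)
The plan is to bound the event $\{G(\mathbf{X},n,\tau,g)>t\}$ by expressing $G$ in terms of the projected process $\mathbf{Y}$ and then controlling the "bad" summands via a recurrence/covering argument. Recall
\begin{equation*}
G(\mathbf{X},n,\tau,g)=\frac{1}{n-\tau}\sum_{k=\tau+1}^{n}\min_{i\in[k-\tau]}g(X_k,X_i).
\end{equation*}
Since $\|g\|_\infty=1$, each summand lies in $[0,1]$, so $G>t$ forces more than a $t$-fraction of the $n-\tau$ summands to be positive; more precisely, because the average exceeds $t$ and each term is at most $1$, at least $t(n-\tau)$ of the indices $k$ must satisfy $\min_{i\in[k-\tau]}g(X_k,X_i)>0$. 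The first step is to turn this averaged statement into a counting statement about indices $k$ whose cell $C(X_k)$ has not been "matched" by a sufficiently distant earlier point.

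\textbf{Key steps.} First, I would observe that the cover $C_1,\dots,C_N$ with $\mathrm{diam}_g(C_j)<t/2$ lets me compare $g$-distances to cell membership: if $X_k$ and some earlier $X_i$ ($i\le k-\tau$) lie in the same cell $C_j$, then $g(X_k,X_i)\le \mathrm{diam}_g(C_j)<t/2\le t$, so that index $k$ contributes at most $t/2$ and is "good." Hence the only indices $k$ that can make $\min_{i\in[k-\tau]}g(X_k,X_i)$ large are those whose cell was \emph{not} visited at any time $i\le k-\tau$. Second, I would isolate the contribution of the tail indices $k>nt/2$: the summands with $k\le nt/2$ number at most $nt/2$ and each is bounded by $1$, contributing at most $(nt/2)/(n-\tau)$ to $G$; using $n\ge 2\tau/t$ this is controllable, so the excess over $t$ must come from indices $k>nt/2$ that are unmatched. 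Third, for such a $k>nt/2$ with $Y_k=j$, being unmatched means no earlier point within $\tau$ of the past lies in $C_j$; the cleanest sufficient description (matching the target bound) is that $C_j$ was never visited over $1\le i\le nt/2-\tau$. Thus each bad index $k$ witnesses, for some cell $j$, the event
\begin{equation*}
\{\exists k>nt/2,\ Y_k=j,\ \forall\, 1\le i\le nt/2-\tau,\ Y_i\neq j\}.
\end{equation*}
Finally, a union bound over the $N$ cells $j$ converts the existence of at least one bad index into the stated sum of probabilities.

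\textbf{Main obstacle.} The delicate point is the exact bookkeeping that shows the excess mass $G-t$ is carried by indices $k>nt/2$ of the \emph{specific} "never-visited-before-$nt/2-\tau$" type, rather than merely "unmatched within the $\tau$-lagged past up to $k-\tau$." One must verify that discarding the first $nt/2$ summands (legitimate since $n\ge 2\tau/t$ keeps $m(n)=\lfloor nt/(2\tau)\rfloor\ge 1$ and the head contributes at most $t/2$) leaves enough forced positive summands among $k>nt/2$, and that each such forced summand's cell failed to appear in the initial window $[\,1,nt/2-\tau\,]$. Once this reduction from the averaged quantity $G$ to a per-cell first-visit event is made precise, the union bound is routine; the genuine work lies entirely in this counting argument and in correctly aligning the two time windows separated by the lag $\tau$.
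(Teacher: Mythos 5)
Your proposal follows essentially the same route as the paper's proof: split $G$ at index $nt/2$, bound the head contribution by counting (each summand at most $\left\Vert g\right\Vert _{\infty }=1$), use $n\geq 2\tau /t$ to force an unmatched index $k>nt/2$ in the tail, convert this via the diameter-$t/2$ cover into the event that the cell $C\left( X_{k}\right) $ is unvisited on $\left[ 1,nt/2-\tau \right] $, and finish with a union bound over the $N$ cells. The only point to tighten is your head estimate: the head summands start at $k=\tau +1$, so there are at most $nt/2-\tau $ of them and the contribution is at most $\left( nt/2-\tau \right) /\left( n-\tau \right) \leq t/2$, whereas your cruder count $\left( nt/2\right) /\left( n-\tau \right) $ actually exceeds $t/2$ (though your later claim that the head is at most $t/2$ is the correct one).
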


\begin{proof}
	Write $M\left( n\right) =m\left( n\right) \tau $. Then%
	\begin{eqnarray*}
		&&\Pr \left\{ G\left( \mathbf{X},n,\tau ,g\right) >t\right\}  \\
		&=&\Pr \left\{ \frac{1}{n-\tau }\sum_{k=\tau +1}^{n}\min_{i\in \left[ k-\tau %
			\right] }g\left( X_{k},X_{i}\right) >t\right\}  \\
		&=&\Pr \left\{ \frac{1}{n-\tau }\sum_{k:\tau <k\leq nt/2}\min_{i:1\leq i\leq
			k-\tau }g\left( X_{k},X_{i}\right) +\frac{1}{n-\tau }\sum_{k:k>nt/2}^{n}%
		\min_{i:1\leq i\leq k-\tau }g\left( X_{k},X_{i}\right) >t\right\}  \\
		&\leq &\Pr \left\{ \frac{1}{n-\tau }\sum_{k:k>nt/2}^{n}\min_{i:1\leq i\leq
			k-\tau }g\left( X_{k},X_{i}\right) >\frac{t}{2}\right\} \text{ since }\frac{%
			nt/2-\tau }{n-\tau }\leq \frac{t}{2} \\
		&\leq &\Pr \left\{ \exists k>nt/2,\min_{i:1\leq i\leq nt/2-\tau }g\left(
		X_{k},X_{i}\right) >\frac{t}{2}\right\} \text{ since }n\geq 2\tau /t\text{.}
	\end{eqnarray*}%
	Recall that diam$_{g}\left( C_{j}\right) <t/2$. Now assume that $X_{k}\in
	C_{j}$ and $\min_{i:1\leq i\leq k-\tau }g\left( X_{k},X_{i}\right) >t/2$.
	Then none of the $X_{i}$ can be in $C_{j}$ because $\sup_{y\in C_{j}}g\left(
	X_{k},y\right) \leq $ diam$_{g}\left( C_{j}\right) <t/2$. We therefore must
	have $X_{i}\notin C_{j}=C\left( X_{k}\right) $ for all $i$. Thus%
	\begin{eqnarray*}
		\Pr \left\{ G\left( \mathbf{X},n,\tau ,g\right) >t\right\}  &\leq &\Pr
		\left\{ \exists k>nt/2,\forall i:1\leq i\leq nt/2-\tau ,X_{i}\notin C\left(
		X_{k}\right) \right\}  \\
		&=&\sum_{j=1}^{N}\Pr \left\{ \exists k>nt/2,Y_{k}=j,\forall 1\leq i\leq
		nt/2-\tau ,Y_{i}\neq j\right\} .
	\end{eqnarray*}%
	\bigskip 
\end{proof}

\begin{proof}[Proof of Theorem \protect\ref{Theorem decay of G}]
	Since $\mathcal{X}$ is $g$-totally bounded there exists a partition $%
	C_{1},...,C_{N}$ such that diam$_{g}\left( C_{j}\right) <t/2$ for all $j$,
	as required for the previous lemma. We may assume that $\pi \left(
	C_{j}\right) >0$ and we work with the induced process $\mathbf{Y}$.
	
	(i) Fix $\epsilon >0$. If the underlying process is ergodic, so is the
	process $Y_{k}$, and for every $j$ we have%
	\begin{equation*}
	\Pr \left\{ \forall 1\leq i\leq n,Y_{i}\neq j\right\} \rightarrow 0\text{ as 
	}n\rightarrow \infty .
	\end{equation*}%
	Then there is $n_{0}\in \mathbb{N}$ such that forall $n\geq n_{0}$ we have $%
	\Pr \left\{ \forall 1\leq i\leq nt/2-\tau ,Y_{i}\neq j\right\} <\epsilon /N$%
	, so by the lemma 
	\begin{eqnarray*}
		\Pr \left\{ G\left( \mathbf{X},n,\tau ,g\right) >t\right\} &\leq &\Pr
		\left\{ \exists k>nt/2,\forall i:1\leq i\leq nt/2-\tau ,X_{i}\notin C\left(
		X_{k}\right) \right\} \\
		&\leq &\Pr \left\{ \exists j\in \left[ N\right] ,\forall i:1\leq i\leq
		nt/2-\tau ,Y_{i}\neq j\right\} <\epsilon .
	\end{eqnarray*}
	
	(ii) To prove almost sure convergence we use the following consequence of
	the Borel-Cantelli lemma (\cite{bauer2011probability}): let $Z_{n}$ be a
	sequence of random variables. If for every $t>0$ we have $\sum_{n\geq 1}\Pr
	\left\{ \left\vert Z_{n}\right\vert >t\right\} <\infty $ then $%
	Z_{n}\rightarrow 0$ almost surely as $n\rightarrow \infty $.
	
	Now note that the events $\left\{ \exists k>nt/2,Y_{k}=j\right\} $ and $%
	\left\{ \forall i:1\leq i\leq nt-\tau ,Y_{i}\neq j\right\} $ are separated
	by a time interval at least $\tau $, so, by the definition of the $\alpha $%
	-mixing coefficients, we have from the lemma that%
	\begin{eqnarray}
	&&\Pr \left\{ G\left( \mathbf{X},n,\tau ,g\right) >t\right\}   \notag \\
	&\leq &\sum_{j=1}^{N}\Pr \left\{ \exists k>nt/2,Y_{k}=j,\forall 1\leq i\leq
	nt/2-\tau ,Y_{i}\neq j\right\}   \notag \\
	&\leq &\sum_{j}\Pr \left\{ X\in C_{j}\right\} \Pr \bigcap_{i:1\leq i\leq 
		\frac{nt-2\tau }{2}}\left\{ \text{ }X_{i}\notin C_{j}\right\} +\alpha _{\tau
	}  \notag \\
	&\leq &\sum_{j}\Pr \left\{ X\in C_{j}\right\} \Pr \bigcap_{i:1\leq i\leq
		nt/\left( 2\tau \right) -1}\left\{ \text{ }X_{i\tau }\notin C_{j}\right\}
	+\alpha _{\tau }  \notag \\
	&\leq &\sum_{j}\pi \left( C_{j}\right) \left( 1-\pi \left( C_{j}\right)
	\right) ^{\left\lfloor nt/\left( 2\tau \right) \right\rfloor -1}+\left\lceil 
	\frac{nt}{2\tau }\right\rceil \alpha _{\tau }.  \label{estimate BK}
	\end{eqnarray}%
	Now, setting $\tau =\left\lceil n^{p}\right\rceil $ and $\alpha _{\tau }\leq
	A\tau ^{-q}$ with $p\in \left( 2/\left( 1+q\right) ,1\right) $, we obtain
	for sufficiently large $n$,%
	\begin{eqnarray*}
		\Pr \left\{ G\left( \mathbf{X},n,\left\lceil n^{p}\right\rceil ,g\right)
		>t\right\}  &\leq &N\left( 1-\min_{j}\pi \left( C_{j}\right) \right)
		^{\left\lfloor \frac{nt}{2\tau }\right\rfloor -1}+\left\lceil \frac{nt}{%
			2\tau }\right\rceil \alpha _{\tau } \\
		&\leq &N\left( 1-\min_{j}\pi \left( C_{j}\right) \right)
		^{tn^{1-p}-1}+An^{1-p-qp}t.
	\end{eqnarray*}%
	Since $1-p-qp<-1$, this expression is summable, and the conclusion follows
	from the Borel-Cantelli Lemma.
	
	(iii) Theorem 1 in \cite{berend2012missing} states that for $%
	p_{1},...,p_{N}\geq 0$, $\sum p_{i}=1$ and $m\in \mathbb{N}$ we have%
	\begin{equation*}
	\sum p_{i}\left( 1-p_{i}\right) ^{m}\leq \frac{N}{em}.
	\end{equation*}%
	Applying this with (\ref{estimate BK}) gives%
	\begin{equation*}
	\Pr \left\{ G\left( \mathbf{X},n,\tau ,g\right) >t\right\} \leq \frac{N}{%
		e\left( \left\lfloor nt/\left( 2\tau \right) \right\rfloor -1\right) }%
	+\left\lceil \frac{nt}{2\tau }\right\rceil \alpha _{\tau }.
	\end{equation*}
\end{proof}

\subsection{Stabilizing the maximal loss\label{Section softening the maximum}%
}

To allow a fixed fraction of excess errors we use a union bound over all
possible "bad" sets $B$ of fixed cardinality as in Theorem \ref{Theorem Main}%
. Now let $\alpha =\left( \left\vert B\right\vert \right) /\left( n-\tau
\right) $ be the allowed fraction of excess errors. The union bound replaces
the term $\ln \left( 1/\delta \right) $ by $\ln \left( \binom{n-\tau }{m}%
/\delta \right) $. From Stirling's approximation we can obtain for $N\in 
\mathbb{N}$ the bound%
\begin{equation*}
\ln \binom{N}{\alpha N}\leq NH\left( \alpha \right) +\text{Rest}\left(
N,\alpha \right) ,
\end{equation*}%
where $H\left( \alpha \right) =\alpha \ln \frac{1}{\alpha }+\left( 1-\alpha
\right) \ln \frac{1}{1-\alpha }$ is the entropy of an $\alpha $-Bernoulli
variable and%
\begin{equation*}
\text{Rest}\left( N,\alpha \right) =-\ln \left( \sqrt{2\pi \alpha \left(
	1-\alpha \right) N}\right) +\frac{1}{12N}\leq \left\{ 
\begin{array}{cc}
0 & \text{if }2\pi N\geq \frac{1}{\alpha \left( 1-\alpha \right) } \\ 
\frac{\ln \left( \pi N/2\right) }{2} & \text{otherwise}%
\end{array}%
\right. .
\end{equation*}%
This leads to the following version of Theorem \ref{Theorem Main} (ii),
where for simplicity we give only the second conclusion.

\begin{corollary}
	Under the condition of Theorem \ref{Theorem Main} let $\alpha \in \left[ 0,1%
	\right] $ be such that $\alpha \left( n-\tau \right) \in \mathbb{N}$. Then
	with probability at least $1-\delta $ in the sample path we have for every $%
	B\subseteq \left[ n-\tau \right] $ of cardinality $\alpha \left( n-\tau
	\right) $ and every $f\in \mathcal{F}$ that%
	\begin{multline*}
	\mathbb{E}_{\pi }\left[ f\left( X\right) \right] -\max_{i\in \left[ n-\tau %
		\right] \cap B^{c}}\Phi \left( f,X_{i}\right)  \\
	\leq \frac{2}{n-\tau }\sum_{k=\tau +1}^{n}\min_{i\in B^{c}\cap \left[ k-\tau %
		\right] }g\left( X_{k},X_{i}\right) +\left\Vert \mathcal{F}\right\Vert
	_{\infty }\varphi _{\tau }+e\left\Vert g\right\Vert _{\infty }\left( H\left(
	\alpha \right) +\frac{\text{Rest}\left( n-\tau ,\alpha \right) +\ln \left(
		1/\delta \right) }{n-\tau }\right) .
	\end{multline*}
\end{corollary}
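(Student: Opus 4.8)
The plan is to reduce this to Theorem \ref{Theorem Main} (ii) applied to each fixed exception set $B$, combined with a union bound over all admissible sets $B$, and then to control the resulting logarithmic factor using the Stirling estimate for the binomial coefficient quoted just above.

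First I would fix a single $B \subseteq [n-\tau]$ with $|B| = m := \alpha(n-\tau)$ and apply Theorem \ref{Theorem Main} (ii) to this $B$ with the rescaled confidence level $\delta' = \delta / \binom{n-\tau}{m}$. This guarantees that, with probability at least $1-\delta'$ over the sample path, the risk bound of Theorem \ref{Theorem Main} (ii) holds for this particular $B$ uniformly over all $f \in \mathcal{F}$, the final term being $\|g\|_\infty\, e \ln(1/\delta') / (n-\tau)$. Note that the supremum over $f$ is already built into Theorem \ref{Theorem Main} (ii), so for each fixed $B$ the bound is simultaneous in $f$.

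Next I would take the union bound over the $\binom{n-\tau}{m}$ possible choices of $B$. Since each $B$-specific bound fails with probability at most $\delta'$, the probability that some choice of $B$ violates its bound is at most $\binom{n-\tau}{m}\delta' = \delta$. On the complementary event, of probability at least $1-\delta$, the risk bound holds simultaneously for every admissible $B$ and every $f \in \mathcal{F}$, with the last term now reading $\|g\|_\infty\, e \ln\bigl(\binom{n-\tau}{m}/\delta\bigr) / (n-\tau)$.

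Finally I would split $\ln\bigl(\binom{n-\tau}{m}/\delta\bigr) = \ln\binom{n-\tau}{m} + \ln(1/\delta)$ and insert the stated inequality $\ln\binom{N}{\alpha N} \leq N H(\alpha) + \text{Rest}(N,\alpha)$ with $N = n-\tau$; dividing by $n-\tau$ and pulling out the common factor $e\|g\|_\infty$ produces precisely the last term of the corollary. I do not expect a genuine obstacle in this argument, since the nontrivial combinatorial estimate is already supplied; the only points requiring care are to apply the union bound with the correctly rescaled confidence $\delta/\binom{n-\tau}{m}$, so that the total failure probability telescopes to $\delta$, and to carry the remainder term $\text{Rest}(n-\tau,\alpha)$ through unchanged.
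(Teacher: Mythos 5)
Your proposal is correct and follows exactly the paper's own argument: a union bound over the $\binom{n-\tau}{\alpha(n-\tau)}$ admissible exception sets $B$, each handled by Theorem \ref{Theorem Main} (ii) at confidence $\delta/\binom{n-\tau}{\alpha(n-\tau)}$, which replaces $\ln(1/\delta)$ by $\ln\bigl(\binom{n-\tau}{\alpha(n-\tau)}/\delta\bigr)$, followed by the quoted Stirling bound $\ln\binom{N}{\alpha N}\leq NH(\alpha)+\text{Rest}(N,\alpha)$ with $N=n-\tau$. No gaps; this is precisely the derivation the paper sketches before stating the corollary.
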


\textbf{Remark.} Ignoring the Rest-term, which is at most of order $\ln
\left( n\right) /n$, there is an additional penalty $e\left\Vert
g\right\Vert _{\infty }H\left( \alpha \right) $ depending on the error
fraction $\alpha $. This can be interpreted as an additional empirical error
term. Since $H\left( \alpha \right) \rightarrow 0$ as $\alpha \rightarrow 0$%
, the bound tolerates a small number of excess errors. On the other hand $%
H\left( \alpha \right) /\alpha \rightarrow \infty $ logarithmically as $%
\alpha \rightarrow 0$, so the penalty is certainly exaggerated relative to a
conventional empirical error term, where the penalty would simply be $\alpha 
$.\bigskip 

\section{Experiments}

While this is primarily a theoretical paper, a small experiment has been
made to study the dependence of $G$, the dominant term in the bound, on
sample size, mixing time and recurrence time.

Let $\zeta ,\zeta _{1},\zeta _{2}\in \left[ 0,1\right] $ and let $\mathcal{Z}
$ be either the unit circle $\left[ 0,1\right] $ mod $1$ or the $2$-torus $%
\left[ 0,1\right] ^{2}$ mod $\left( 1,1\right) $ with Haar measure $\pi $.
Define the map $T:\mathcal{Z}\rightarrow \mathcal{Z}$ be the map $x\mapsto
x+\zeta $ mod $1$ (respectively $\left( x_{1},x_{2}\right) \mapsto \left(
x_{1}+\zeta _{1}\text{ mod }1,x_{2}+\zeta _{2}\text{ mod }1\right) $), and a
Markov process on $\mathcal{Z}$ by chosing a starting point in $\mathcal{Z}$
at random and the transition law 
\begin{equation*}
\Pr \left\{ A|x\right\} =\left( 1-p\right) \mathbf{1}\left\{ Tx\in A\right\}
+p\pi \left( A\right) .
\end{equation*}

Call the process so defined $\left( X_{i}\right) _{i\in \mathbb{N}}$ (or $%
\left( X_{i1},X_{i2}\right) _{i\in \mathbb{N}}$ respectively. It has
invariant distribution $\pi $ and follows iterates of the quasiperiodic
motion $Tx,T^{2}x,T^{3}x,...$ until a reset-event occurs with probability $p$
and then selects a new starting point, etc. It is exponentially mixing, with
spectral gap $p$ and relaxation time $\tau _{rel}=p^{-1}$ and mixing time $%
\tau \geq p^{-1}\ln \left( 1/\epsilon \right) $ to have distance $\epsilon $
from stationarity. The values of $\zeta ,\zeta _{1},\zeta _{2}$ are chosen
that the one-dimensional process with $p=0$ (deterministic) has a recurrence
time to intervals of length $0.01$ of ca 100 time increments, and the two
dimensional process has a recurrence time to squares of side-length $0.01$
of at least 5000.

From this we generate a sequence of images $Y_{i}\in \mathbb{R}^{784}$ by
taking one of the MNIST characters ("3") and either rotating it by the angle 
$2X_{i}\pi $ or, respectively, by rotating it by $X_{i1}$ and scaling it by
a factor $0.75+\cos \left( 2X_{i2}\pi \right) /4$ (so the scaling is in the
range $\left[ 0.5,1\right] $). The resulting image is adjusted to have mean
grey-level equal to zero and normalized to euclidean norm $1/2$ in $\mathbb{R%
}^{784}$. We generate these processes with $p=1$ (corresponding to iid
sampling), $0.1$ and $0.001$ corresponding to mixing times $\tau =1$, $19$
and $190$, the latter two chosen for TV-distance to independence of  $%
\epsilon =0.1$. We computed the value of $G$ for sample sizes $2^{n}$ with $%
n=1,...,12$, whenever the sample size exceeded the mixing time. The results
are shown in Figure \ref{Figure1} and Figure \ref{Figure2} respectively. They show, the
dimension dependent decay of $G$. For the slower mixing rate we also plotted 
$\ln \left( \tau /n\right) $ at those values, where this exceeds $\ln \left(
G\right) $, indicating that the bound in Theorem \ref{Theorem Main} is
better than bounds decaying as $\tau /n$ (this comparison ignores constants
and possible complexity values in the other bound). This indicates, that
Theorem \ref{Theorem Main} is preferable whenever the mixing time exceeds
the recurrence time.

\begin{figure}[h!]
	\includegraphics[width=\linewidth]{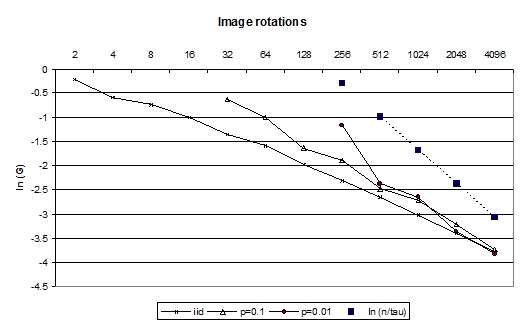}  
	\caption{The estimator $G$ on rotated images for different sample sizes and reset probabilities. We also plot an optimistic estimate on competing bounds}
	\label{Figure1}
\end{figure}

\begin{figure}[h!]
	\includegraphics[width=\linewidth]{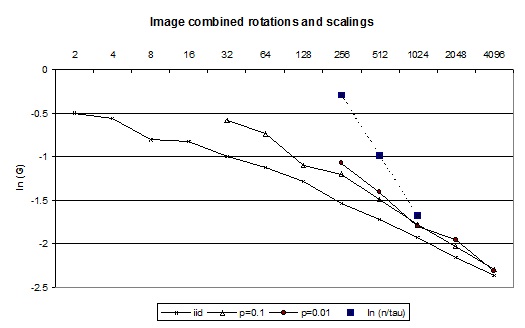}  
	\caption{The estimator $G$ on rotated and scaled images for different sample sizes and reset probabilities. We also plot an optimistic estimate on competing bounds}
	\label{Figure2}
\end{figure}

\end{document}